\newcommand{\vect}[1]{\mathbf{#1}}
\newcommand{\diffs}[3]{\frac{\partial^2 #1}{
\ifx#2#3 
\partial #2^2
\else
\partial #2 \partial #3
\fi
}}
\let\proof\@undefined
\let\endproof\@undefined
\newtheorem{prop}{Proposition}
\newtheorem{ass}{Assumption}
\newtheorem{thm}{Theorem}
\theoremstyle{definition}
\newtheorem{remark}{Remark}
\def\BState{\State\hskip-\ALG@thistlm}
\algnewcommand{\Inputs}[1]{%
	\State \textbf{Inputs:}
	\Statex \hspace*{\algorithmicindent}\parbox[t]{.8\linewidth}{\raggedright #1}
}
\algnewcommand{\Initialization}[1]{%
	\State \textbf{Initialization:}
	\Statex \hspace*{\algorithmicindent}\parbox[t]{.8\linewidth}{\raggedright #1}
}
\newcommand\blue[1]{{\textcolor{blue}{#1}}}
\newcommand{\colvec}[2][.9]{%
	\scalebox{#1}{%
		\renewcommand{\arraystretch}{1}%
		$\begin{bmatrix}#2\end{bmatrix}$%
	}
}
\newif\ifdraft
\author{Enrica Rossi$^{1}$, Marco Tognon$^2$, Ruggero Carli$^1$, Antonio Franchi$^{3,4}$, Luca Schenato$^1$,
	\thanks{\hspace{-1.5em}$^1$Department of Information Engineering, University of Padova, Italy {\tt \scriptsize \href{mailto:enrica.rossi.1@studenti.unipd.it}{enrica.rossi.1@studenti.unipd.it},
			\href{mailto:carlirug@dei.unipd.it}{carlirug@dei.unipd.it},
			\href{mailto:schenato@dei.unipd.it}{schenato@dei.unipd.it}}}
	\thanks{\hspace{-1.5em}$^2$Autonomous Systems Lab, ETH Zurich, 8092 Z\"urich, Switzerland, {\tt \scriptsize  \href{mailto:mtognon@ethz.ch}{mtognon@ethz.ch}}}		
	\thanks{\hspace{-1.5em}$^3$Robotics and Mechatronics lab, University of Twente, Enschede, The Netherlands, {\tt \scriptsize\href{mailto:a.franchi@utwente.nl}{a.franchi@utwente.nl}}}
	\thanks{\hspace{-1.5em}$^4$LAAS-CNRS, Universit\'e de Toulouse, CNRS, Toulouse, France, {\tt \scriptsize  \href{mailto:marco.tognon@laas.fr}{marco.tognon@laas.fr}}}  	
	}
\begin{document}
	\title{\bf {Control of over-redundant cooperative manipulation via sampled communication}}
	
	\maketitle

	\begin{abstract} 
	In this work we consider the problem of mobile robots that need to manipulate/transport an object via cables or robotic arms. We consider the scenario where the number of manipulating robots is redundant, i.e. a desired object configuration can be obtained by different configurations of the robots. The objective of this work is to show that communication can be used to implement cooperative local feedback controllers in the robots to improve disturbance rejection and reduce structural stress in the object. In particular we consider the realistic scenario where measurements are sampled and transmitted over wireless, and the sampling period is comparable with the system dynamics time constants. We first propose a kinematic model which is consistent with the overall systems dynamics under high-gain control and then we provide sufficient conditions for the exponential stability and monotonic decrease of the configuration error under different norms. Finally, we test the proposed controllers on the full dynamical systems showing the benefit of local communication. 
	\end{abstract}
	\vspace{0cm}
	\section{INTRODUCTION}\label{sec:intro}
Object manipulation and transportation via the adoption of multiple mobile autonomous vehicles is becoming a very active research area \cite{Lippi:18,feng2020overview} thanks to the potential benefits in building constructions, infrastructure maintenance, and product delivery \cite{KondakAJ10}. These vehicles are connected to the manipulated object via robotics arms, cables or joints and by properly controlling the vehicle position and orientation it is possible to indirectly control the position and orientation of the object. The objective of the aforementioned application is typically to control the vehicles in order to have the object to reach a desired configuration or to follow a certain trajectory within a cluttered environment. 

While object manipulation by multiple robots and specifically parallel manipulators is a well studied and established subject under the assumption of reliable and centralized computation\cite{Marino:17}, in the presence of mobility with wireless communication then not all ``end-effectors'' can be reliably controlled by a single intelligence since communication is delayed or even lost. Cooperative manipulation via multiple mobile robots then become more challenging. One possible direction to address cooperative manipulation is to avoid communication altogether and rely on \emph{implicit communication}. 
It has been shown, that in the context of load transportation and manipulation, by measuring the forces experienced by the end-effector of  the robot can indirectly provide information on how to create a leader-follower control strategy to manipulate the object \cite{Marino:18}. For example if a robot (leader) pulls the object in a certain direction, then the other robots (followers) will be able to estimate such direction and can therefore accompany the object along the desired direction~\cite{SiegwartIJRR19,2019l-GabTogPalFra}.
Although very effective, such approach suffers for the need of force sensors  at each end-effector or precise wrench estimation methods, it exerts major stresses on the object, and manipulation speed is reduced since the control is delayed till substantial forces are measured. 
Moreover, there are relevant scenarios, such as load transportation with unmanned aerial vehicles (UAVs) where the object is typically manipulated via flexible cables, in which force sensors cannot be easily employed.
An alternative approach to overcome the aforementioned problems is to employ wireless communication such as Wi-Fi which has high-bit rates and is a readily available technology. 
Wi-Fi potentially allows for communication rates (packets per second) of 1-kHz \cite{Branz:20} but in multi-agent scenario and in order to guarantee limited packet losses, it is reasonable to consider rates of 10-100Hz or even smaller. At these rates, the typical assumption of considering continuous-time feedback control, i.e. measurements continuously fed to the controller, is unrealistic and sampled dynamics has to be considered in order to design control strategies that are still effective. A step in this direction has been presented in \cite{Rossi:20}, where sampled communication has been considered explicitly in a kinematic model for aerial load transportation with three drones each connected with two cables to support a load. The vehicles are assumed to be able to directly control their velocity vector and the objective is to move the system configuration (load pose as well the cables inclination) from one (feasible) initial configuration into another (feasible) desired configuration.   

In this work we extend \cite{Rossi:20} along several direction. First we consider a more general scenario where multiple robots are intended to manipulate the same object. This scenario is relevant for example in load transportation where the load weight is large and multiple drones are needed to sustain it. However, differently from  \cite{Rossi:20}, the system becomes over-redundant, i.e. there are multiple end-effectors/vehicles configuration which lead to the same load configuration, thus implying that not all end-effectors velocities are feasible. We address this problem by proposing a more general kinematic model that is shown to be consistent with the overall system dynamics if high-gain force  control in employed and separation of time-scales applies. Such kinematic model is used to design two control strategies, one off-line and the other on-line similarly to \cite{Rossi:20}, which are guaranteed to provide exponential convergence under certain sufficient conditions that involve the sampling period and the size of the initial conditions set. The second novelty is to guarantee, besides exponential stability, also smooth transition from the initial configuration to the final configuration in terms of monotonically decreasing error for different types of norms. This is important to guarantee that this transition does not approach singular configurations. Finally, despite the proposed controllers are designed based on a kinematic model, we show how they can be extended also to a more realistic dynamical model and we tested them via extensive numerical simulations.

The paper is organized as follows. In Section \ref{sec:modeling} we describe the kinematic model of the over-redundant system of interest. In particular we show that it is consistent with the system dynamics when a high-gain force control is applied. Moreover we formulate the problem we aim at solving, that is, designing a feedback controller driving the system toward a desired final configuration.
In Section \ref{sec:cont_time_contr} we propose a feedback controller when continuous-time measurements are available. In Section \ref{sec:discr_time_contr} we properly re-design the controller in case of sampled measurements focusing on two strategies, one off-line and one on-line. In Section \ref{sec:simul}, we provide numerical simulations testing the proposed controllers. Finally, in Section \ref{sec:concl}, we gather our conclusions.


	\section{MODELING AND PROBLEM FORMULATION}
	\label{sec:modeling}
	\subsection{Kinematics of multi-agent systems}
	\label{subsec:kinematics}
	
	We consider a general multi-agent systems composed by $N$ agents which are connected through a common object $L$. This is the case for example in aerial load transportation where there are $N$ aerial vehicles connected though flexible cables to a common load $L$. We assume that the system can be described by $N$ Lagrangian variables each associated with an agent and possibly of different dimension, i.e. $\vect{q}_i\in\mathbb{R}^{m_i}$, and an additional Lagrangian variable associated with the object $\vect{q}_L\in\mathbb{R}^{m_L}$. In the aerial load transportation examples, the former Lagrangian variables correspond to angles of the cables with respect to the load, and the latter Lagrangian variable to the load pose. We finally assume that system exhibits a \emph{star-like interaction topology}, where each agent is connected only to the common object, i.e. the positions of the $N$ agents, $\vect{p}_i$ can be expressed in terms of the Lagrangian variables as follows:

		\begin{equation}
	\vect{p}_i  = 
		\vect{h}^{(i)}(\vect{q}_{i}, \vect{q}_L) 	
		\label{eq:h(q)}
	\end{equation}
	where, for $i=1,\ldots,N$, $\vect{p}_i \in \mathbb{R}^{n_i}$ denotes the position of the $i$-th agent. From now on, we denote by $\vect{p}$ the vector collecting the agents configurations, i.e., $\vect{p}= 	\colvec{	\vect{p}^\top_1 \;\, \dots \;\, \vect{p}^\top_N}^\top \, \in \, \mathbb{R}^n$ where $n=\sum_{i=1}^N n_i$, by $\vect{q}$ the vector collecting the Lagrangian coordinates of the system, i.e., $\vect{q = \colvec{\vect{q}_1^\top \;\, \dots \;\, \vect{q}_N^\top \; \vect{q}^\top_L}^\top}\,\in \, \mathbb{R}^m$ where $m=\sum_{i=1}^N m_i \,+ \, m_L$, and by $\vect{h}$ the overall kinematic map, i.e., $\vect{h = \colvec{\vect{h}_1(\cdot)^\top \;\, \dots \;\, \vect{h}_N(\cdot)^\top}^\top}$. Based on this assumption, the  differential kinematic model that can be derived differentiating \eqref{eq:h(q)}:
		\begin{equation}
	\dot{\vect{p}}=\vect{A_q}\dot{\vect{q}}
	\label{eq:kin_model}
	\end{equation}
	where the Jacobian $ \vect{A_q} =\frac{\partial \vect{h(q)}}{\partial \vect{q}} \,\,\in \mathbb{R}^{n \times m}$ has the following form
		\begin{align}
	\vect{A_q}  =\colvec{
		\begin{array}{ccc|c}
		\vect{A}^{(1)}_{\vect{q}_{1}} &  & \text{\fontsize{5mmm}{5mm}\selectfont$\vect{0}$} &  \vect{A}^{(1)}_{\vect{q}_{L}}\\
		& \ddots & &  \vdots\\
		\text{\fontsize{5mmm}{5mm}\selectfont$\vect{0}$} & & \vect{A}^{(N)}_{\vect{q}_{N}} &   \vect{A}^{(N)}_{\vect{q}_{L}}
		\end{array}
	}\in \mathbb{R}^{n \times m}\label{eq:jacobian1}
	\end{align}
	and
	$
	\vect{A}^{(i)}_{\vect{q}_{i}} =   \frac{\partial \vect{h}^{(i)}(\vect{q})}{\partial \vect{q}_i} \in \mathbb{R}^{n_i \times m_i}$ and  $\vect{A}^{(i)}_{\vect{q}_{L}} =   \frac{\partial \vect{h}^{(i)}(\vect{q})}{\partial \vect{q}_L} \in \mathbb{R}^{n_i \times m_L}.
	$


	\begin{figure}
		\centering
		\includegraphics[width=0.7\columnwidth]{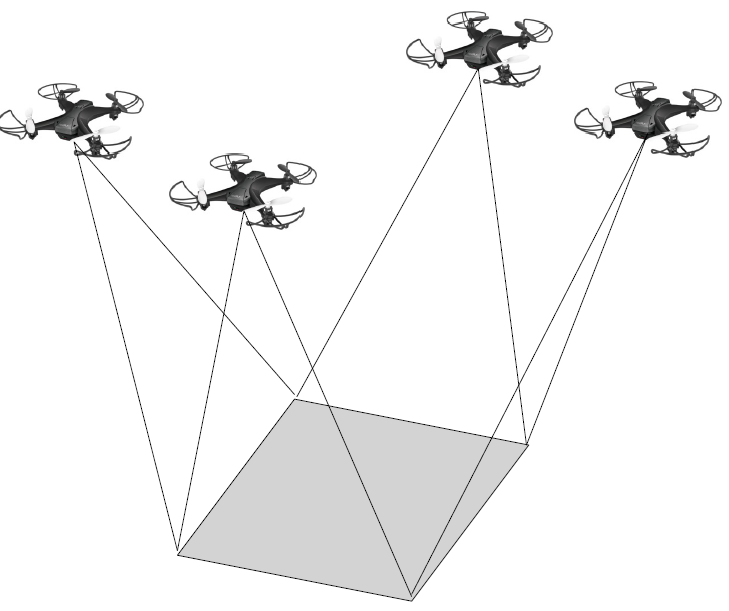}
		\caption{Example of a non-square multi-agent system where four aerial robots transport a payload: $n=12$ (positions $\mathbb{R}^3$ of four UAVs ) and $m=10$ (position $\mathbb{R}^3$ and orientation of the load $\mathbb{R}^3$, and four cable-load angle $\mathbb{R}^4$).}
		\label{fig:squared_flycrane}
	\end{figure}


It is  also useful to introduce
the manifold \mbox{$\mathcal{M}:=\{ ( \vect{p}, \vect{q})\, |\, \vect{p}=\vect{h}(\vect{q})\}$} and its tangent space at a point $(\vect{p}, \vect{q})$ as \mbox{$T_{(\vect{p}, \vect{q})}\mathcal{M} := \{ ( \dot{\vect{p}}, \dot{\vect{q}})\, |\, \dot{\vect{p}}=\frac{\partial \vect{h}}{\partial \vect{q}}\dot{\vect{q}}\}$}. Under the assumption that $\frac{\partial \vect{h}}{\partial \vect{q}}=\vect{A_q}\in\mathbb{R}^{n\times m}$ is full rank then a possible parametrization of such space is given by
$$ T_{(\vect{p}, \vect{q})}\mathcal{M}  =  \{ (\vect{A_q} \dot{\vect{q}} , \dot{\vect{q}}), \forall \dot{\vect{q}}\in \mathbb{R}^m\}$$
i.e., the tangent space is a $m$-dimensional linear space embedded into an $(n+m)$-dimensional space. 

\begin{ass}\label{ass:fullRankSystem}
We consider the over-redunt scenario where $n>m$, i.e. $\vect{A_q}$ is a tall-matrix.
\end{ass}


\subsection{System dynamics under high-gain force feedback}

To provide a more realistic behaviour of the multi-agent robotic systems, it is important to consider the associated dynamical model which can be written without loss of generality as
\begin{equation}
\vect{M(q)\ddot{q}+C(q,\dot{q})\dot{q}+g(q)=J^\top\!\!(q)F}
\label{eq:joint_dyn_eq}
\end{equation}
where $\vect{M(q)}$ is the joint-space inertia matrix, $\vect{C(q,\dot{q})}$ is the Coriolis and centripetal coupling matrix, $\vect{g(q)}$ is the gravity term, and
$\vect{J^\top(q)F}$ describes the effect of the external forces applied to the system, being $\vect{J(q)}= \vect{A_q}$ the Jacobian matrix and $\vect{F}=[\vect{f}^\top_1 \cdots \vect{f}^\top_N]^\top$, where $\vect{f}_i\in \mathbb{R}^{n_i}$ is the vector force applied by each agent. For the aerial load transportation example we refer to \cite{Sreenath-RSS-13} for the detailed derivation of the dynamical model. For sake of simplicity, we assume that each of the $N$ agent can directly control its force vector $\vect{f}_i$, i.e. the global force vector $\vect{F}$ is fully controllable. Without loss of generality, we can neglect $  \vect{g(q)}$ since a gravity compensation action can be implemented by the controller. Hence \eqref{eq:joint_dyn_eq} becomes
\begin{equation}
\vect{M(q)\ddot{q}+C(q,\dot{q})\dot{q}}=\vect{A}_\vect{q}^\top \vect{F}.
\label{eq:simplified_dyn}
\end{equation}
Let us assume to apply an external force $\vect{F}$ to the robot  of the type
\begin{equation}
\vect{F}:=-\alpha(\vect{\dot{p}-u})
\label{eq:Fdef}
\end{equation}
where $\alpha>0$ is a scalar gain, $\dot{\vect{p}}$ is the velocity vector of the $N$ agents and $\vect{u}\in \mathbb{R}^n$ is the vector of desired velocities. Such feedback choice has the objective to virtually increase viscous damping so that the inertia in the dynamics becomes negligible. 

Recalling the kinematic constraint \eqref{eq:kin_model}, the dynamical model can be rewritten as
\begin{align*}
\vect{M(q)\ddot{q}+C(q,\dot{q})\dot{q}}
=-\alpha\vect{A_q}^\top \vect{A_q} \vect{\dot{q}}- \alpha  \vect{\vect{A}_\vect{q}^\top u
}
\end{align*}
or equivalently 
\begin{equation}
\epsilon \vect{\ddot{q}}=\vect{M(q)}^{-1}\big(-\vect{A}_\vect{q}^\top \vect{A}_\vect{q} \vect{\dot{q}} + \vect{A}_\vect{q}^\top \vect{u}- \epsilon \vect{C(q,\dot{q})\dot{q}} \big)
\label{eq:dyn_model}
\end{equation}
where $\epsilon =1/\alpha$. Observe that for high gain $\alpha\gg 1$, i.e. for $\epsilon\approx 0 $, the dynamical equations are satisfied only if 
\begin{equation}\label{eqn:HG}
\alpha \gg 1 \Longrightarrow -\vect{A_q}^\top \vect{A}_\vect{q} \vect{\dot{q}} + \vect{A_q^\top u} \approx 0,
\end{equation}
If $\vect{A_q}$ is full-rank, then the previous expression is equivalent to 
\begin{equation}\label{eqn:kin_q}
\vect{\dot{q}} \approx (\vect{A_q}^\top \vect{A_q})^{-1} \mathbf{A}^\top_{\mathbf{q}}   \vect{u}=\mathbf{A}_{\mathbf{q}}^\dagger   \vect{u},
\end{equation}
where $\mathbf{A}_{\mathbf{q}}^\dagger \in \mathbb{R}^{m\times n}$ is the pseudo-inverse of $\mathbf{A}_{\mathbf{q}}$.
Recalling the kinematic constraint \eqref{eq:kin_model}, then the previous expression also implies
\begin{equation}\label{eqn:kin_u}
\vect{\dot{p}} \approx  \mathbf{A}_{\mathbf{q}}\mathbf{A}_{\mathbf{q}}^\dagger   \vect{u} = \Pi_{\mathbf{q}}  \vect{u}=:\vect{u}_{\mathbf{q}},
\end{equation}
where $\Pi_{\mathbf{q}(t)}=\mathbf{A}_{\mathbf{q}} \mathbf{A}_{\mathbf{q}}^\dagger \in \mathbb{R}^{n \times n}$ is the orthogonal projection matrix which projects the desired velocity vector $\vect{u}\in\mathbb{R}^n$, onto a $m$-dimensional subspace that corresponds to the column-span of the matrix $ \mathbf{A}_{\mathbf{q}}$, for which a pictorial representation is given in Fig.~\ref{fig:manifold_proj_u}. 
\begin{figure}
	\centering
	\includegraphics[width=0.6\columnwidth]{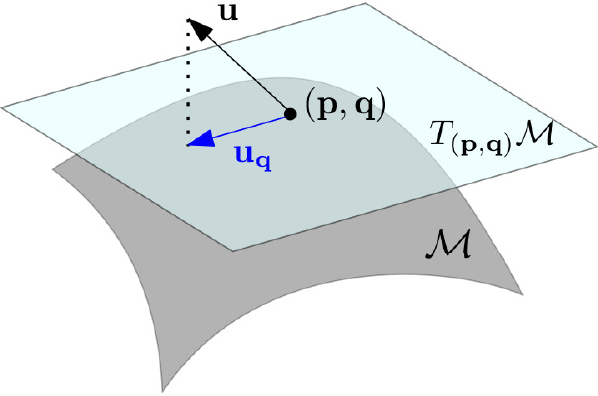}
	\caption{Manifold of structural constraints and tangent space for projection.}
	\label{fig:manifold_proj_u}
\end{figure}

A number of important considerations are in order: the first is that approximation \eqref{eqn:HG} is true only if certain conditions hold, and it corresponds to the slow-dynamics manifold of the global dynamical system~\eqref{eq:joint_dyn_eq}  \cite{khalil2002nonlinear}. We will show in Section~\ref{sec:cont_time_contr} that such conditions hold for our problem formulation and we refer the interested reader to \cite{khalil2002nonlinear} for general theory of singular perturbations and separation of time-scale principle. The second consideration is that approximation~\eqref{eqn:HG} leads to simpler controllable kinematic model for the Lagrangian variables given by \eqref{eqn:kin_q} which will be use to devise a kinematic feedback controller $\mathbf{u} = \kappa(\mathbf{q})$ that will allow us to drive the Lagrangian variables $\mathbf{q}$ to a desired configuration $\mathbf{q}^r$. The third consideration is that the actual agent velocity vector $ \vect{\dot{p}}$ cannot track any desired velocity vector $\vect{u}$, but they will follow its projection $\vect{u}_{\mathbf{q}}$ on the feasible tangent space $T_{(\vect{p}, \vect{q})}\mathcal{M}$. 

	\subsection{Problem Formulation}\label{subsec:problem_form}
	 
As mentioned above, the goal of this paper is to design 
a feedback controller in order to steer the system from an initial configuration $\mathbf{q}^0$ to a desired final configuration $\mathbf{q}^r$, where the latter could be a way point of a trajectory that is generated offline. To do so, we will consider the simplified kinematic model suggested in~\eqref{eqn:kin_q} that we report here for convenience:
\begin{equation}\label{eq:kinematic_model}
\vect{\dot{q}}(t)=\mathbf{A}_{\mathbf{q}(t)}^\dagger   \vect{u}(t)
\end{equation}
which generalizes to over-redundant system $m>n$ the kinematic model proposed in \cite{Rossi:20} given by $\vect{\dot{q}}(t)=\mathbf{A}_{\mathbf{q}(t)}^{-1}   \vect{u}(t) $ which was limited to square systems where $m=n$.

Similarly to \cite{Rossi:20}, we consider two possible architectures
	\vspace{0cm}
	\begin{eqnarray}
	\mathbf{u}_i(t) &=& \kappa_i^s (\vect{q}_i(t),\vect{q}_L(t);\vect{q}^r_i,\vect{q}^r_L), \ \ i=1,\ldots, N \label{eqn:sparse}\\
	\mathbf{u}_i(t) &=& \kappa_i^c (\vect{q}(t);\vect{q}^r) , \hspace{1.7cm} i=1,\ldots, N \label{eqn:centralized}
	\end{eqnarray}
	where the former tries to maintain the same sparsity of the Jacobian \ifdraft
	\blue{\st{-- referred to as \emph{sparse feedback} --}}
	\else 
	\fi while the latter exploit the full knowledge of the vector $\vect{q}$.  
	As in \cite{Rossi:20}, we will show that, if $\vect{q}(t)$ is continuously accessible to the local planner, then \eqref{eqn:sparse} is sufficient to drive the system to $\vect{q}^r$; otherwise, if $\vect{q}(t)$ is available at sampled time instants, then the two architectures give rise to two different strategies with
different performance and computational requirements. Eventually, the final controller which will be used in the dynamical model will be:
$$ F = -\alpha(\vect{\dot{p}}(t)-\kappa(\mathbf{q}(t))).$$

In order to guarantee that $\mathbf{A}_{\mathbf{q}(t)}$ is full-rank at all times we need to guarantee that $\vect{q}(t)$ remains in a feasible set. Differently from \cite{Rossi:20} which considered only ball sets based on the 2-norm, here we consider more general norms that might be more convenient when Lagrangian variables are of different nature (angles vs lengths) or we need to weight more some specific ones as compared to others. 	
		 More specifically, for $R>0$, we define the following sets
		 \begin{align*}
		 \mathcal{B}_{2}(\vect{q}^r)&=\{ \vect{q}\in\mathbb{R}^m\, | \, \| \vect{q} - \vect{q}^r\|_{\bm{D},2}< R\}\\
		 \mathcal{B}_{\infty}(\vect{q}^r)&=\{ \vect{q}\in\mathbb{R}^m\, | \, \| \vect{q} - \vect{q}^r\|_{\bm{D},\infty}<R \}		 
		 \end{align*} 
	where $\bm{D}$ is a weighting matrix such that $\bm{D} = \text{diag}\left\{\bm{D}_1, \ldots, \bm{D}_N, \bm{D}_L\right\}$, with $\bm{D}_i = \frac{1}{d_i^2} \vect{I}_{m_i}$ and $\bm{D}_L =\frac{1}{d_L^2} \vect{I}_{m_L}$, and where	 
	\begin{align*}
	\| \vect{q} - \vect{q}^r\|_{\bm{D},2}&= \| \bm{D} \left(\vect{q} - \vect{q}^r \right)\|_{2} =  \sqrt{\sum_i \frac{\|q_i-q_i^r\|^2}{d_i^2}}\\
	\| \vect{q} - \vect{q}^r\|_{\bm{D},\infty}&= \| \bm{D} \left(\vect{q} - \vect{q}^r \right)\|_{\infty} =\max_i \frac{\|q_i-q_i^r\|_\infty}{d_i^2}
	\end{align*} 
	 are respectively the $2$-norm and $\infty$-norm, weighted with $\bm{D}$. Without loss of generality we make the following assumption on the weights of matrix $\bm{D}$,
	 \begin{equation}\label{eq:d_i}
	 \min \left\{d_1,\ldots, d_N, d_L \right\} \geq 1.
	 \end{equation}
	 
	 From now on, for the sake of notational convenience, by the symbol $ \mathcal{B}_{*}(\vect{q}^r)$ we denote interchangeably either the set $ \mathcal{B}_{2}(\vect{q}^r)$ or the set $\mathcal{B}_{\infty}(\vect{q}^r)$.
	 
	 We conclude this section with the following assumption which is necessary to avoid singular kinematic configurations:
	\begin{ass}\label{ass:fullRankSystem}
		The following relations hold:
		\begin{enumerate}
			\item the matrix $\vect{A_q}$ is full-rank and the map $\vect{h}$ is three times continuously differentiable for all $q\in\mathcal{B}_{*}(\vect{q}^r)$. In addition, these properties can be extended by continuity on the closure of such set, defined as $\overline{\mathcal{B}}_{*}(\vect{q}^r)$;
			\item $\vect{q}^0\in \mathcal{B}_{*}(\vect{q}^r)$.
		\end{enumerate}
	\end{ass}

	\section{Feedback controller: continous-time }\label{sec:cont_time_contr}
	
	In this section we describe the feedback controller that generates the velocities vector $\vect{u}$ when continuous-time measurements are available. In such scenario
	we propose the following control law
		\begin{align}
		\mathbf{u}(t) =  -\mathbf{A}_{\mathbf{q}(t)}\,\mathbf{K}\,(\mathbf{q}(t)-\mathbf{q}^r)
		\label{eq:ctrl}
		\end{align}
		where $\mathbf{K}$ is a gain matrix to be designed. According to \eqref{eq:kinematic_model} and \eqref{eq:ctrl}, we have that the dynamics of $\vect{q}(t)$ evolves as		
		\begin{align}\label{eq:controlled_kin}
		\dot{\vect{q}}(t) =- \mathbf{A}_{\mathbf{q}(t)}^\dagger\mathbf{A}_{\mathbf{q}(t)}\mathbf{K}(\mathbf{q}(t)-\mathbf{q}^r)=-\mathbf{K}(\mathbf{q}(t)-\mathbf{q}^r). 
	\end{align}
where we exploit the fact $\mathbf{A}_{\mathbf{q}(t)}^\dagger\mathbf{A}_{\mathbf{q}(t)}=\vect{I}_m$ since $\mathbf{A}_{\mathbf{q}(t)}$ is assumed to be full column-rank.
In this paper, we assume that the gain matrix has the form $\vect{K} = k \cdot \overline{\vect{K}}$, where $k$ is a positive scalar gain, i.e., $k>0$, and where $\overline{\vect{K}}$ is a diagonal matrix with the following structure
$$
\overline{\vect{K}} = \text{diag}\left\{\vect{K}_1, \ldots, \vect{K}_N, \vect{K}_L\right\},
$$ 
where $\vect{K}_i = k_i \vect{I}_{m_i}$, $\vect{K}_L =k_L \vect{I}_{m_L}$ being also $k_1,\ldots,k_N, k_L$ positive real numbers, i.e., $k_1,\ldots,k_N, k_L \,>\,0$. Notice that a different weight is assigned to each $\vect{q}_i$, thus allowing for different convergence rates among the different variables.
	
The convergence properties of control law \eqref{eq:ctrl} are established in the next Proposition.

	\begin{prop}\label{prop:convergence_q}
	Consider system described by \eqref{eq:kinematic_model} and \eqref{eq:ctrl}, where $\vect{K}$ is defined as above. Assume Assumption \ref{ass:fullRankSystem} holds true. Then, for any $(N+2)$-upla of positive real numbers $k, k_1,\ldots,k_N,k_L$, we have that 
	\begin{enumerate}
		\item for all $t \geq 0$, $\vect{q}(t) \in \mathcal{B}_{*}(\vect{q}^r)$;
		\item the trajectory $\vect{q}(t)$ converges exponentially fast to $\vect{q}^r$.
	\end{enumerate}
\end{prop}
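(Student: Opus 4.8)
The plan is to observe that the closed loop \eqref{eq:controlled_kin} collapses, inside the feasible set, to a constant-coefficient linear system in the error, and then to certify by a self-consistency argument that the trajectory never leaves that set. Introduce the error $\vect{e}(t) := \vect{q}(t) - \vect{q}^r$. As long as $\vect{q}(t) \in \mathcal{B}_*(\vect{q}^r)$, Assumption \ref{ass:fullRankSystem} makes $\mathbf{A}_{\mathbf{q}(t)}$ full column-rank, so $\mathbf{A}_{\mathbf{q}(t)}^\dagger \mathbf{A}_{\mathbf{q}(t)} = \vect{I}_m$ and \eqref{eq:controlled_kin} becomes $\dot{\vect{e}}(t) = -\mathbf{K}\,\vect{e}(t)$. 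The decisive simplification is that the configuration-dependent Jacobian has cancelled, leaving a linear time-invariant ODE; the remaining work is to guarantee that the hypothesis $\vect{q}(t) \in \mathcal{B}_*(\vect{q}^r)$ under which it was derived actually holds for all $t \ge 0$.

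First I would use the block structure of $\mathbf{K} = k\,\overline{\vect{K}}$. Since $\overline{\vect{K}}$ is block-diagonal with scalar blocks $\vect{K}_i = k_i\vect{I}_{m_i}$ and $\vect{K}_L = k_L\vect{I}_{m_L}$, the error decouples into independent blocks $\dot{\vect{e}}_i = -k k_i\,\vect{e}_i$, each with solution $\vect{e}_i(t) = e^{-k k_i t}\,\vect{e}_i(0)$ (and likewise for the load block). Each block error is therefore a positive scalar multiple of its initial value, so in any norm $\|\vect{e}_i(t)\| = e^{-k k_i t}\,\|\vect{e}_i(0)\|$ is monotonically non-increasing because $k, k_i > 0$.

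Next I would establish claim (1). The point is that $\bm{D}$ shares the block-diagonal, scalar-per-block structure of $\overline{\vect{K}}$, so both weighted norms are monotone (non-decreasing) functions of the individual block norms $\|\vect{e}_i(t)\|$ -- a weighted sum of squares for $\|\cdot\|_{\bm{D},2}$ and a weighted maximum for $\|\cdot\|_{\bm{D},\infty}$. Since every $\|\vect{e}_i(t)\|$ is non-increasing, each weighted norm is non-increasing along the linear flow, hence bounded above by its value at $t = 0$, which is $< R$ by Assumption \ref{ass:fullRankSystem}. The hard part will be closing this without circularity, since the linear dynamics was only justified while $\vect{q}(t) \in \mathcal{B}_*(\vect{q}^r)$. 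I would use a continuation argument: let $[0,T)$ be the maximal interval on which the solution exists and stays in the open set $\mathcal{B}_*(\vect{q}^r)$; there the linear analysis gives $\|\vect{e}(t)\|_{\bm{D},*} \le \|\vect{e}(0)\|_{\bm{D},*} < R$, so the trajectory remains uniformly bounded away from the boundary. If $T$ were finite, continuity would force the exit value $\|\vect{e}(T)\|_{\bm{D},*} = R$, contradicting the strict bound; hence $T = \infty$ and $\vect{q}(t) \in \mathcal{B}_*(\vect{q}^r)$ for all $t \ge 0$.

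Claim (2) is then immediate: globally $\vect{e}(t) = e^{-\mathbf{K}t}\vect{e}(0)$ with $\mathbf{K}$ diagonal and positive definite, so $\|\vect{e}(t)\| \le e^{-\lambda t}\|\vect{e}(0)\|$ with $\lambda = k\,\min\{k_1,\dots,k_N,k_L\} > 0$, giving exponential convergence of $\vect{q}(t)$ to $\vect{q}^r$. I expect the only genuinely delicate step to be the invariance/continuation argument; the compatibility between the scalar-multiple-of-identity block structures of $\mathbf{K}$ and $\bm{D}$ is what makes each block decay at a single rate and carry a single weight, so that both weighted norms inherit the monotonicity simultaneously, and I would double-check that no step tacitly needs more than this shared blocking.
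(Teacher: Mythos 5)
Your proof is correct and follows essentially the same route as the paper's: the closed-loop error dynamics collapses, via $\mathbf{A}_{\mathbf{q}}^\dagger\mathbf{A}_{\mathbf{q}}=\vect{I}_m$, to the linear diagonal system $\dot{\vect{e}}=-\vect{K}\vect{e}$, whose block-wise exponential decay yields both claims. The paper's own proof is a one-liner that stops at $\vect{e}(t)=e^{-\vect{K}t}\vect{e}(0)$; your extra steps --- the monotonicity of both weighted norms under the shared block structure of $\vect{K}$ and $\bm{D}$, and the continuation argument ruling out escape from $\mathcal{B}_{*}(\vect{q}^r)$, which is what licenses the Jacobian cancellation in the first place --- make rigorous the invariance claim (1) that the paper leaves implicit.
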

\begin{proof}
Let $\vect{e}=\vect{q}-\vect{q}^r$. Then from \eqref{eq:controlled_kin} we have that $\dot{\vect{e}}=-\vect{K} \vect{e}$ whose solution is $\vect{e}(t) = e^{-\vect{K}t} \vect{e}(0)$ where $e^{-\vect{K}t}$ is a diagonal matrix whose diagonal elements are exponentially decaying functions. This concludes the proof. 
\end{proof}
It is worth stressing two important properties of the proposed control strategy. Firstly, the trajectory $\vect{q}(t)$ generated by \eqref{eq:ctrl} never exits the ball $\mathcal{B}_{*}(\vect{q}^r)$, thus avoiding any singular point. Secondly, feedback law \eqref{eq:ctrl} can be rewritten component-wise as 
$$
\vect{u}_i =\vect{A}^{(i)}_{\vect{q}_{i}}  \left(\vect{q}_i - \vect{q}_i^r\right)   + \vect{A}^{(i)}_{\vect{q}_{L}} \left(\vect{q}_L - \vect{q}_L^r\right),
$$
that is, \eqref{eq:ctrl} exhibits the decentralized structure of control architecture in \eqref{eqn:sparse}.

We conclude this section by showing that the exponential stability property established in the previous Proposition 
holds true also for the dynamic model in \eqref{eq:simplified_dyn} provided the gain $\alpha$ is sufficiently high. This fact is formally stated in the next theorem.

\begin{thm}
	\label{thm:sing_pert_sys_adapted}
	Consider the dynamic model \eqref{eq:simplified_dyn} with $\vect{F}$ given in \eqref{eq:Fdef} and $\vect{u}$ defined as in \eqref{eq:ctrl}. 
	There exist $\overline{\alpha}>0$ and a neighborhood of $\vect{q}^r$, say $\bar{\mathcal{B}}(\vect{q}^r)$, contained in $\mathcal{B}_{*}(\vect{q}^r)$, i.e., $\bar{\mathcal{B}}(\vect{q}^r) \subseteq \mathcal{B}_{*}(\vect{q}^r)$, such that, if $\vect{q}(0)\in \bar{\mathcal{B}}(\vect{q}^r)$ and $\alpha>\overline \alpha$, then
		\begin{enumerate}
		\item for all $t \geq 0$, $\vect{q}(t) \in \bar{\mathcal{B}}(\vect{q}^r)$;
		\item the trajectory $\vect{q}(t)$ converges exponentially fast to $\vect{q}^r$.
	\end{enumerate}
	\end{thm}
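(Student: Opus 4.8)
The plan is to recognize the closed-loop system as a standard singularly perturbed system with small parameter $\epsilon = 1/\alpha$ and to invoke Tikhonov's theorem on exponential stability (\cite{khalil2002nonlinear}). First I would put \eqref{eq:dyn_model} in standard form by choosing the slow variable $\vect{e} = \vect{q}-\vect{q}^r$ and the fast variable $\vect{z} = \dot{\vect{q}} + \mathbf{K}\vect{e}$, i.e.\ the deviation of $\dot{\vect{q}}$ from its quasi-steady-state value $\mathbf{A}_{\mathbf{q}}^\dagger\vect{u} = -\mathbf{K}\vect{e}$. Substituting the control $\vect{u} = -\mathbf{A}_{\mathbf{q}}\mathbf{K}\vect{e}$ from \eqref{eq:ctrl} into \eqref{eq:dyn_model} and using the identity $-\mathbf{A}_{\mathbf{q}}^\top\mathbf{A}_{\mathbf{q}}\dot{\vect{q}} + \mathbf{A}_{\mathbf{q}}^\top\vect{u} = -\mathbf{A}_{\mathbf{q}}^\top\mathbf{A}_{\mathbf{q}}\vect{z}$, the dynamics becomes
\begin{align*}
\dot{\vect{e}} &= -\mathbf{K}\vect{e} + \vect{z},\\
\epsilon\,\dot{\vect{z}} &= -\vect{M}(\vect{q})^{-1}\mathbf{A}_{\mathbf{q}}^\top\mathbf{A}_{\mathbf{q}}\,\vect{z} + \epsilon\,\vect{g}(\vect{e},\vect{z},\epsilon),
\end{align*}
where $\dot{\vect{q}}=\vect{z}-\mathbf{K}\vect{e}$ and $\vect{g} = \mathbf{K}\dot{\vect{q}} - \vect{M}(\vect{q})^{-1}\vect{C}(\vect{q},\dot{\vect{q}})\dot{\vect{q}}$ is $C^1$ and bounded on compact subsets of $\mathcal{B}_*(\vect{q}^r)$ by Assumption~\ref{ass:fullRankSystem}. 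Note that $(\vect{e},\vect{z})=(\zerov,\zerov)$, i.e.\ $(\vect{q},\dot{\vect{q}})=(\vect{q}^r,\zerov)$, is an equilibrium.

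Second, I would verify the two standard hypotheses of the theorem. The reduced system, obtained by setting $\epsilon=0$ (which forces $\vect{z}=\zerov$), is $\dot{\vect{e}} = -\mathbf{K}\vect{e}$, exactly the controlled kinematics \eqref{eq:controlled_kin}, hence exponentially stable by Proposition~\ref{prop:convergence_q}. For the boundary-layer system $d\vect{z}/d\tau = -\vect{M}(\vect{q})^{-1}\mathbf{A}_{\mathbf{q}}^\top\mathbf{A}_{\mathbf{q}}\,\vect{z}$ (with the slow variable frozen and $\tau=t/\epsilon$), I would use the Lyapunov function $W=\tfrac12\vect{z}^\top\vect{M}(\vect{q})\vect{z}$, for which $\tfrac{dW}{d\tau} = -\vect{z}^\top\mathbf{A}_{\mathbf{q}}^\top\mathbf{A}_{\mathbf{q}}\vect{z} = -\|\mathbf{A}_{\mathbf{q}}\vect{z}\|_2^2 \le -\lambda\,\|\vect{z}\|_2^2$, with $\lambda = \min_{\vect{q}\in\overline{\mathcal{B}}_*(\vect{q}^r)}\lambda_{\min}(\mathbf{A}_{\mathbf{q}}^\top\mathbf{A}_{\mathbf{q}}) > 0$. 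This constant is strictly positive and uniform in the slow variable precisely because $\mathbf{A}_{\mathbf{q}}$ is full column-rank on the compact closure $\overline{\mathcal{B}}_*(\vect{q}^r)$, which is exactly what Assumption~\ref{ass:fullRankSystem} guarantees; this yields exponential stability of the fast subsystem uniformly in $\vect{e}$.

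Third, I would form a composite Lyapunov function $V = (1-\theta)\,\tfrac12\|\vect{e}\|_2^2 + \theta\,W(\vect{e},\vect{z})$ with $\theta\in(0,1)$ and, following the interconnection-condition computation in \cite{khalil2002nonlinear}, show that there is a threshold $\overline{\epsilon}=1/\overline{\alpha}$ such that $\dot V$ is negative definite for all $\epsilon<\overline{\epsilon}$, establishing exponential stability of $(\vect{e},\vect{z})=\zerov$, hence of $(\vect{q},\dot{\vect{q}})=(\vect{q}^r,\zerov)$. I would then define $\bar{\mathcal{B}}(\vect{q}^r)$ as the $\vect{q}$-projection of a sublevel set $\{V\le c\}$ with $c$ small enough that its slow component lies inside $\mathcal{B}_*(\vect{q}^r)$; forward invariance of $\{V\le c\}$ under $\dot V<0$ then gives claim~1 (so $\mathbf{A}_{\mathbf{q}}$ remains full-rank and no singular configuration is reached), while the exponential decay of $V$ gives claim~2.

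I expect the main obstacle to be this last step: converting the abstract region-of-attraction estimate of Tikhonov's theorem into the geometrically prescribed containment $\bar{\mathcal{B}}(\vect{q}^r)\subseteq\mathcal{B}_*(\vect{q}^r)$ together with the forward-invariance of part~1. The stability estimates themselves are routine thanks to the clean cascade structure and the quadratic certificates, but matching the composite sublevel set to the weighted-norm ball $\mathcal{B}_*$ is delicate because the fast transient in $\vect{z}$ can momentarily inflate $\|\vect{e}\|$ before the slow exponential contraction dominates. Controlling this requires restricting the admissible initial velocity (e.g.\ $\dot{\vect{q}}(0)=\zerov$, so that $\vect{z}(0)=\mathbf{K}\vect{e}(0)$ is bounded by $\vect{e}(0)$) and tuning $c$, $\theta$, and $\overline{\alpha}$ jointly so that the overshoot never leaves $\mathcal{B}_*(\vect{q}^r)$.
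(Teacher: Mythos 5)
Your proposal follows essentially the same route as the paper's proof: both recast the closed loop as a singularly perturbed system with $\epsilon=1/\alpha$, identify the reduced system $\dot{\vect{e}}=-\mathbf{K}\vect{e}$ (the controlled kinematics of Proposition~\ref{prop:convergence_q}) and a boundary layer governed by $-\vect{M}^{-1}\mathbf{A}_{\mathbf{q}}^\top\mathbf{A}_{\mathbf{q}}$, and conclude via Theorem 11.4 of \cite{khalil2002nonlinear} with a composite Lyapunov function. The differences are cosmetic rather than substantive---you pre-shift the fast variable to $\vect{z}=\dot{\vect{q}}+\mathbf{K}\vect{e}$ whereas the paper keeps $\vect{z}=\dot{\vect{q}}$ and identifies the quasi-steady state $\vect{h}(\vect{x})=-k\vect{x}$ inside the theorem, and you exhibit the explicit boundary-layer certificate $W=\tfrac12\vect{z}^\top\vect{M}\vect{z}$ where the paper argues through the eigenvalues of $\vect{M}^{-1}\mathbf{A}^\top\mathbf{A}$---while your closing caveat about restricting the initial velocity is, if anything, a more careful treatment of a point the paper's proof passes over silently.
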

	\begin{proof}
			The proof follows from a direct application of Theorem 11.4 of \cite{khalil2002nonlinear}. 		
	In order to apply this Theorem we first rewrite the dynamics of the system in a suitable state-space form; to do that let us define the state variables $\vect{x}=\vect{q}-\vect{q}^r$ and $\vect{z}=\dot{\vect{q}}$.
By simple algebraic manipulations it is possible to see that  \eqref{eq:dyn_model} when \eqref{eq:ctrl} is applied can be rewritten in state-space form as
	\begin{align}	\label{eq:state_space_form_cont_time_separation}
	\hspace{-4mm}\begin{array}{lll}
	\dot{\vect{x}}\!\!\! &=& \!\!\!\vect{z}\\
	\varepsilon \dot{\vect{z}} \!\!\!&=& \!\!\! -\vect{M_{x}}^{\!-1} \vect{A_x}^\top \vect{A_x}\vect{z} \!-\! k \vect{M_x}^{\!-1} \vect{A_x}^\top \vect{A_x} \mathbf{x} \!-\!\varepsilon \vect{M_x}^{\!-1} \vect{C_{x,z}} \, \, \vect{z}
	\end{array} 
	\end{align}
	where $\vect{A_{\vect{x}}}, \, \, \vect{M_{x}}, \, \, \vect{C_{x,z}}$ are compact notations for $\vect{A_{\vect{x}+\vect{q}^r}}, \, \, \vect{M_{\vect{x}+\vect{q}^r}}, \, \, \vect{C_{x+q^r,z}}$
	
	System in \eqref{eq:state_space_form_cont_time_separation} can be analyzed in a multitime-scale scenario \cite{khalil2002nonlinear}, where the dynamical system is characterized by slow and fast transients as far as the response to external stimuli is concerned. In particular, \eqref{eq:state_space_form_cont_time_separation} can be written as
	\begin{align*}
	\left\{  \begin{array}{ll}
	\dot{\vect{x}} = \vect{f}(\vect{z})\\
	\varepsilon \dot{\vect{z}}= \vect{g}(\vect{x},\vect{z},\varepsilon)
	\end{array} \right.
	\end{align*}
	that is, in the form introduced in Theorem 11.4 of \cite{khalil2002nonlinear}, except for the fact that some dependences do not appear (the function $\vect{f}$ depends only on $\vect{z}$ and does not depend on $(t,\vect{x,z})$). 
	To make the proof as self-contained as possible, we have reported the statement of Theorem 11.4 in Appendix \ref{app:stand_sing_pert_model}. Next, we show that the five hypotheses of Theorem 11.4 are satisfied by system in \eqref{eq:state_space_form_cont_time_separation}. 	

\noindent
{\bf{Assumption 1}}: In our scenario it is easy to verify that $\vect{f(0)=0}$ and $\vect{g(0,0,\varepsilon)}=\vect{0}$. 

\noindent
{\bf{Assumption 2}}: In our scenario we have the equation $\vect{0=g(e_x,z},0)=-\vect{M_\vect{x}}^{-1}[(\vect{A}_{\vect{x}}^\top \vect{A_{\vect{x}}})\vect{z}+ k  (\vect{A_{\vect{x}}}^\top \vect{A_{\vect{x}}})\vect{x}]$. Since $\vect{A_{\vect{x}}}^\top\vect{A_{\vect{x}}}$ and $\vect{M_\vect{x}}$ are strictly p.d. matrices by assumption, then the equation is equivalent to $\vect{z}+k\vect{x}=\vect{0} \Leftrightarrow \vect{z}=-k \vect{x}=\vect{h(x)}$ that is an isolated root. Moreover note that $\vect{h(0)=0}$. 

\noindent
{\bf{Assumption 3}}: The validity of this condition is guaranteed from Ass. \ref{ass:fullRankSystem}.

\noindent
{\bf{Assumption 4}}: In our scenario the reduced system is
	$$\vect{\dot{x}}=\vect{f(h(x))}=-k\vect{x}$$
	that is exponentially stable since $k>0$.
	
\noindent
{\bf{Assumption 5}}:	In our scenario the boundary-layer system is
	$$\frac{\text{d}\vect{y}}{\text{d}\tau}=\vect{g}(\vect{x},\vect{y+h(x),0})$$
	where 
	\begin{align*}
	\vect{g}(\vect{x},\vect{y+h(x),0})&=\vect{g}(\vect{x},\vect{y}-k \vect{x},0)\\
	&=-\vect{M_{\vect{x}}}^{-1}(\vect{A}_{\vect{x}}^\top \vect{A_{\vect{x}}})[(\vect{y}-k \vect{x})+ k  \vect{x}]\\
	&=-\vect{M_{\vect{x}}}^{-1}(\vect{A}_{\vect{x}}^\top \vect{A_{\vect{x}}})\vect{y}
	\end{align*}
	From now on, to keep the notation lighter, we omit the subscripts of the matrices $\vect{M}$ and $\vect{A}$.
Note that $\vect{M}$ is positive definite, hence $\vect{M^{-\frac{1}{2}}=(M^{-\frac{1}{2}})^\top}$ and
	\begin{align*}
	\vect{M^{-1}A^\top A}&=\vect{M^{-\frac{1}{2}}M^{-\frac{1}{2}}A^\top A}=\vect{M^{-\frac{1}{2}}(M^{-\frac{1}{2}}\vect{A^\top A}M^{-\frac{1}{2}})M^{\frac{1}{2}}}\\
	&=\vect{M^{-\frac{1}{2}}(A M^{-\frac{1}{2}})^\top(A M^{-\frac{1}{2}})M^{\frac{1}{2}}}	\end{align*}
	Thus $\vect{M^{-1}A^\top A}$ is similar to $(\vect{A M}^{-\frac{1}{2}})^\top(\vect{A M}^{-\frac{1}{2}})$ and, in turn, they have the same eigenvalues. Since $(\vect{A M}^{-\frac{1}{2}})^\top(\vect{A M}^{-\frac{1}{2}})$ is definite positive, then the eigenvalues of $\vect{M^{-1}A^\top A}$ are positive real numbers thus implying that the matrix $-\vect{M_{\vect{x}}}^{-1}(\vect{A}_{\vect{x}}^\top \vect{A_{\vect{x}}})$ is Hurwitz . It follows that the boundary-layer system is exponentially stable. Since $\mathcal{B}_*(\vect{q}^r)$ is compact, the exponential stability property is uniform in $\vect{x}$.

	The fact that the assumptions of Theorem \ref{thm:Khalil} are satisfied implies the existence of a Lyapunov function $V(\vect{x})$ for the reduced system and a Lyapunov function $W(\vect{x}, \vect{z})$ for the boundary system. Consider the function
	$$
	\nu(\vect{x}, \vect{z}) = V(\vect{x}) + W(\vect{x}, \vect{z}).
	$$
	Then, exploiting the properties of $\vect{f}$ and $\vect{g}$, and following the lines of proof of Theorem 11.4 of \cite{khalil2002nonlinear}, we can conclude that there exists $\bar{\epsilon}>0$ and a neighborhood of $\vect{q}^r$, $\bar{\mathcal{B}}(\vect{q}^r) \subseteq \mathcal{B}_{*}(\vect{q}^r)$, such that for any $0<\epsilon < \bar{\epsilon}$ and $\vect{q}(0) \in \bar{\mathcal{B}}(\vect{q}^r)$ we have that $\vect{q}(t) \in \mathcal{B}_{*}(\vect{q}^r)$ for all $t>0$ and $\dot{\nu} <0$ for all $t>0$. This concludes the proof.
	\end{proof}
	
\section{Feedback controller: Sampled Measurements}\label{sec:discr_time_contr}
		
We now consider the evolution of \eqref{eq:kinematic_model} under sampled dynamics, that is we assume that $\vect{q}$ is measured at the time instants $hT$, $h=0, 1, 2, \ldots$ where $T$ is the sampling time. Moreover, we assume that the vehicles reference velocity $\vect{u}(t)$ is kept constant within a time window $T$ using \eqref{eq:ctrl}:
\begin{align}\label{eq:SampledControlLaw}
\vect{u}(t) &=\vect{u}_h= - k\vect{A}_{\vect{q}_h} \overline{\vect{K}} (\vect{q}_h-\vect{q}^r),   \,\,\,\,\,\, hT  \leq t <(h+1)T,
\end{align}
for $h=1,2, \ldots$, and where we defined $\vect{q}_h = \vect{q}(hT)$.
In this scenario, assuming $\vect{A}_{\vect{q}(t)}, \ t\geq 0$ full rank, the evolution of $\vect{q}(t)$ becomes	
\begin{align}\label{eq:sampled_dyn}
\dot{\mathbf{q}}(t) &= -k\vect{A}^{\dagger}_{\vect{q}(t)}\vect{A}_{\vect{q}_h} \overline{\vect{K}} (\vect{q}_h-\vect{q}^r), \,\,\,\, hT  \leq t <(h+1)T.
\end{align}

In this section we assume that $\overline{\vect{K}}$ is assigned a-priori (or, equivalently, that the values of $k_L$ and $k_i$, $i=1,\ldots,N$, are assigned a-priori), while $k$ is a positive parameter to be designed, possibly time varying, i.e., $k=k_h$, such that the stability of the system is still guaranteed. 

The approach we propose for the design of $k$ within each interval $\left[hT, \,(h+1)T\right]$, is similar to the one proposed in \cite{Rossi:20} for square systems and 
it is based on the analysis of the following auxiliary  system  
\begin{align}\label{eqn:NL}
\dot{\mathbf{e}}'  (\tau;\vect{e}_h) &=- A^{\dagger}_{\vect{q}^r+\vect{e}'(\tau;\vect{e}_h)} \,A_{\vect{q}^r+\vect{e}_h} \,\overline{\vect{K}}\,\vect{e}_h=:\vect{f}(\vect{e}'(\tau;\vect{e}_h))\\
\vect{e}'(0;\vect{e}_h)&=\vect{e}_h \nonumber
\end{align}
where $\vect{e}_h$ is such that $\vect{q}^r+\vect{e}_h \in \mathcal{B}_{*}(\vect{q}^r)$ and 
$\vect{e}'(\cdot; \cdot) \in \mathbb{R}^m$.

It is worth stressing that, in \cite{Rossi:20}, in presence of square systems, that is, $n=m$, the authors analyzed the evolution of $\dot{\mathbf{q}}(t) = -k\vect{A}^{-1}_{\vect{q}(t)}\vect{A}_{\vect{q}_h}  (\vect{q}_h-\vect{q}^r)$, which is a simplified version of \eqref{eq:sampled_dyn} where $\vect{A}^{\dagger}$ is replaced by $\vect{A}^{-1}$ and where $\bar{\vect{K}}=\vect{I}$.

Let $\vect{e}(t) := \vect{q}(t)-\vect{q}^r$. Note that by direct inspection we have
\begin{align}
\vect{e}(t) =  \vect{e}'(k(t-hT);\vect{e}_h),   \qquad hT \leq t <(h+1)T. \label{eq:relation_q_qP}	 
\end{align} 
Hence, once the solution $\vect{e}'(\tau; \vect{e}_h)$ is computed, then $\vect{e}(t)$ can be obtained through shifting by $hT$ and rescaling by $k$ as long as $\vect{e}'(k(t-hT);\vect{e}_h)$ exists; then, $\vect{q}(t)=\vect{e}(t)+\vect{q}^r$. The major benefit of this approach is that the analysis of \eqref{eqn:NL} is independent of the gain $k$ and the sampling period $T$. Solution of \eqref{eqn:NL} is characterized by interesting and useful properties that we describe in the remaining of this subsection and that we will exploit later on to properly design $k$.
	
For the sake of notational convenience, from now on the symbol $\|\cdot\|$ will be used interchangeably for both $\|\cdot\|_{\vect{D},2}$ and $\|\cdot\|_{\vect{D},\infty}$. 

We start our analysis by observing that $\vect{f}(\vect{e}'(0;\vect{e}_h))=-\overline{\vect{K}}\vect{e}_h$. By exploiting this property it is not difficult to show that the flow of \eqref{eqn:NL} at $\tau=0$ is directed toward the interior of the ball centered at $\vect{q}^r$ and passing through $\vect{e}_h$; this implies that $\|\vect{e}'(0^+;\vect{e}_h)\| < \|\vect{e}_h\|$ being $0^+$ is the time instant just after $\tau=0$.
Based on this fact we can introduce the following temporal variables
\begin{align}
\tau_s(\vect{e}_h) \!&:=\!
\min_{\tau } \{\!\tau \!>\!0  \, | \,  \|\vect{e}'(\tau;\vect{e}_h)\|=\|\vect{e}_h\|\!\} \nonumber\\
\tau_o(\vect{e}_h) \!&:=\! \underset{0 \leq \tau \leq \tau_s(\vect{e}_h)}{\text{arginf}} \|\vect{e}'(\tau;\vect{e}_h)\|, \nonumber
\end{align}
where $\tau_s(\vect{e}_h)=\infty\,$, if $\|\vect{e}'(\tau;\vect{e}_h)\| < \|\vect{e}_h\|$ for all  $\tau>0$. \footnote{A more rigorous and precise analysis of the existence of solution of \eqref{eqn:NL} and of the definition of $\tau_s(\vect{e}_h)$ and $\tau_o(\vect{e}_h)$ can be provided following the lines of Proposition 2 in  \cite{Rossi:20}.}
Basically, $ \tau_s(\vect{e}_h)$ represents the first time that the solution $\vect{e}'(\tau;\vect{e}_h)$ hits the boundary of the ellipse or rectangle depending on the norm used, centered at $\vect{q}^r$ and passing though the initial condition $\vect{e}_h$, 
\ifdraft
\blue{\st{i.e., \emph{stability time}}}
\else 
\fi, while $\tau_o(\vect{e}_h)$,  represents the time that $\vect{e}'(\tau;\vect{e}_h)$ is closest to the origin.

All the above quantities are graphically sketched in Fig.\ref{fig:B0}.
\begin{figure}
	\centering	
	\includegraphics[width=.35\textwidth]{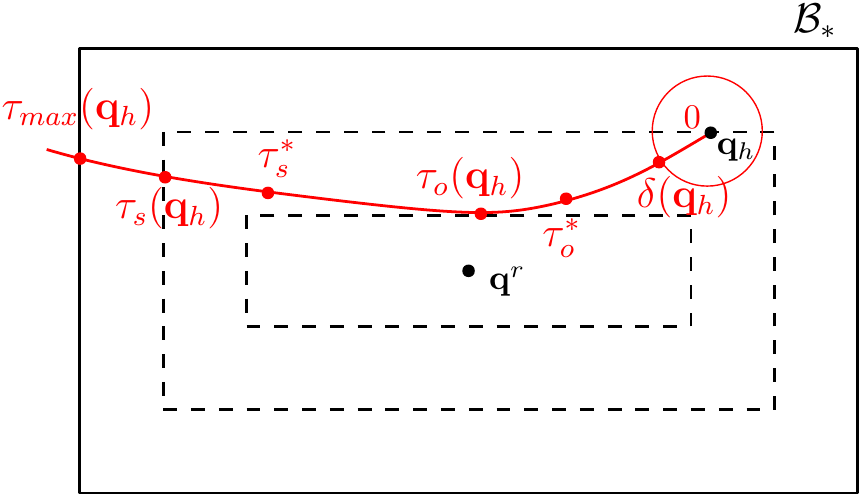}
	\caption{Representation of $\mathcal{B}_0$,  $\delta_{\vect{e}_h}$, $\tau_o^*$, $\tau_o(\vect{e}_h)$, $\tau_s^*$, $\tau_s(\vect{e}_h)$ and $\tau_{max}(\vect{e}_h)$. The red labels represent time values, while the black ones are points or sets in $\mathbb{R}^m$. The $\infty$-norm is considered.}
	\label{fig:B0}
	\vspace{-0.4cm}
\end{figure}
%
The following result provides an important characterization of the trajectories $\vect{e}'(\tau;\vect{e}_h)$.
\begin{prop}\label{prop:tau_0-rho}
	There exist $\bar \tau_o$ and $0 \leq \rho <1$ such that, for all $\vect{e}_h$ s.t. $\vect{q}^r + \vect{e}_h \,\in \,\mathcal{B}_{*}(\vect{q}^r)$, it holds
	\begin{equation*}\label{eq:bar_tau_s}
	0 < \bar \tau_o \leq \tau_o(\vect{e}_h) \leq\infty \,\,\,\,\,\,\,\,\, \text{and} \,\,\,\,\,\,\,\,\, \|\vect{e}'(\bar \tau_o ;\vect{e}_h)\| \leq \rho \| \vect{e}_h\|.
	\end{equation*}
\end{prop}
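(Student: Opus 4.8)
The plan is to reduce the statement to a single \emph{uniform} decrease estimate for the auxiliary flow near $\tau=0$, and then to make that estimate uniform over all admissible $\vect{e}_h$ -- crucially including $\vect{e}_h\to\vect{0}$ -- by a rescaling. The starting point is the identity already noted before the statement, $\vect{f}(\vect{e}'(0;\vect{e}_h))=-\overline{\vect{K}}\vect{e}_h$, which holds because $\vect{A}^{\dagger}_{\vect{q}^r+\vect{e}_h}\vect{A}_{\vect{q}^r+\vect{e}_h}=\vect{I}_m$ at $\tau=0$. Since $\overline{\vect{K}}$ is diagonal positive definite and commutes with the (block-scalar, diagonal) weight $\bm{D}$, this initial velocity points strictly into the sublevel set $\{\|\cdot\|\le\|\vect{e}_h\|\}$ for both the weighted $2$- and $\infty$-norms, at a rate bounded below by $k_{\min}:=\min\{k_1,\dots,k_N,k_L\}$ times $\|\vect{e}_h\|$. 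The whole proof amounts to showing this strict inward motion persists for a time $\bar\tau_o>0$ and a contraction factor $\rho<1$ that do not depend on $\vect{e}_h$.

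First I would strip out the dependence on the size of $\vect{e}_h$. Writing $r:=\|\vect{e}_h\|$, $\vect{v}:=\vect{e}_h/r$ and $\vect{\xi}(\tau):=\vect{e}'(\tau;\vect{e}_h)/r$, the flow \eqref{eqn:NL} becomes
\begin{equation*}
\dot{\vect{\xi}}=-\vect{A}^{\dagger}_{\vect{q}^r+r\vect{\xi}}\,\vect{A}_{\vect{q}^r+r\vect{v}}\,\overline{\vect{K}}\vect{v}=:\vect{F}(\vect{\xi},\vect{v},r),\qquad \vect{\xi}(0)=\vect{v},\quad\|\vect{v}\|=1.
\end{equation*}
The key feature is that $\vect{F}$ is continuous on the compact set $\mathcal{D}=\{(\vect{\xi},\vect{v},r):\|\vect{v}\|=1,\ \|\vect{\xi}\|\le1,\ 0\le r\le R\}$, on which $\vect{q}^r+r\vect{\xi}\in\overline{\mathcal{B}}_{*}(\vect{q}^r)$, so that $\vect{A}$ is full rank by Assumption~\ref{ass:fullRankSystem} and $\vect{A}^{\dagger}$ is well defined and $C^2$; moreover on the diagonal $\vect{\xi}=\vect{v}$ one has $\vect{A}^{\dagger}\vect{A}=\vect{I}_m$, hence $\vect{F}(\vect{v},\vect{v},r)=-\overline{\vect{K}}\vect{v}$ for \emph{every} $r\in[0,R]$. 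Thus the rescaled field at the initial point is independent of $r$, which is exactly what lets the estimate survive the limit $\vect{e}_h\to\vect{0}$.

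Next I would exploit compactness. For the $2$-norm, work with $\tilde V(\tau)=\tfrac12\|\vect{\xi}(\tau)\|^2=\tfrac12\vect{\xi}^\top\bm{D}^2\vect{\xi}$, so that $\dot{\tilde V}=-\vect{\xi}^\top\bm{D}^2\vect{A}^{\dagger}\vect{A}\,\overline{\vect{K}}\vect{v}=:-\Psi(\vect{\xi},\vect{v},r)$. On the compact subset $\{\vect{\xi}=\vect{v}\}$ we have $\Psi=\vect{v}^\top\bm{D}^2\overline{\vect{K}}\vect{v}\ge k_{\min}\|\vect{v}\|^2=k_{\min}>0$ (using that $\bm{D}^2$ and $\overline{\vect{K}}$ commute); since $\Psi$ is continuous, a standard tube argument yields $\delta>0$ with $\Psi\ge k_{\min}/2$ whenever $\|\vect{\xi}-\vect{v}\|\le\delta$ on $\mathcal{D}$. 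Bounding the speed $\|\dot{\vect{\xi}}\|\le M$ uniformly on $\mathcal{D}$ and setting $\bar\tau_o:=\delta/M$ (shrinking it if needed), a short bootstrap shows the trajectory stays within $\delta$ of $\vect{v}$ and inside $\mathcal{D}$ on $[0,\bar\tau_o]$ (it cannot leave $\|\vect{\xi}\|\le1$ because $\dot{\tilde V}<0$ there). Hence $\dot{\tilde V}\le-k_{\min}/2$ on $[0,\bar\tau_o]$, giving strict monotone decrease of $\|\vect{\xi}\|$ and $\|\vect{\xi}(\bar\tau_o)\|\le\sqrt{1-k_{\min}\bar\tau_o}=:\rho<1$, both uniform in $(\vect{v},r)$. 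Undoing the rescaling gives $\|\vect{e}'(\bar\tau_o;\vect{e}_h)\|=r\|\vect{\xi}(\bar\tau_o)\|\le\rho\|\vect{e}_h\|$; and since $\|\vect{e}'(\cdot;\vect{e}_h)\|$ is strictly decreasing on $[0,\bar\tau_o]$ it has not yet returned to $\|\vect{e}_h\|$, so $\bar\tau_o\le\tau_s(\vect{e}_h)$ and its minimiser over $[0,\tau_s]$ satisfies $\tau_o(\vect{e}_h)\ge\bar\tau_o$, with the case $\tau_s=\infty$ (and $\tau_o$ possibly $+\infty$) covered by the same inequalities.

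The main obstacle is precisely this uniformity down to $\vect{e}_h=\vect{0}$: without rescaling, both the target decrease and the available time shrink like $\|\vect{e}_h\|$, and a naive continuity argument degenerates. The rescaling removes the difficulty by exhibiting the limiting ($r\to0$) field as the explicit linear flow $\vect{\xi}(\tau)=(\vect{I}-\tau\overline{\vect{K}})\vect{v}$, whose contraction and time-to-minimum ($\ge1/k_{\max}$, where $k_{\max}:=\max\{k_1,\dots,k_N,k_L\}$) are uniform on the unit sphere. A secondary technical point is the non-smoothness of the $\infty$-norm: there I would replace $\tilde V$ by the upper Dini derivative of $\|\vect{\xi}(\tau)\|_\infty$ and use that, on any active coordinate $j$, $\mathrm{sgn}(\xi_j)\dot\xi_j<0$ near $\vect{\xi}=\vect{v}$, so the same compactness and speed estimates deliver uniform $\bar\tau_o$ and $\rho$; alternatively one may transfer the $2$-norm estimate through equivalence of the two norms, absorbing the equivalence constants into $\rho$. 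Existence of the solution on $[0,\bar\tau_o]$ follows from continuity of $\vect{f}$ together with confinement to the compact set $\mathcal{D}$, consistent with the more careful treatment referenced from Proposition~2 of \cite{Rossi:20}.
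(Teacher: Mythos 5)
Your main argument is correct, but it takes a genuinely different route from the paper's. The paper's proof is a quantitative majorization: it expands the solution of \eqref{eqn:NL} via Taylor's theorem with integral remainder, $\vect{e}'(\tau;\vect{e}_h)=(\vect{I}-\tau\overline{\vect{K}})\vect{e}_h+\tau^2\vect{d}(\tau,\vect{e}_h)$, bounds the remainder uniformly as $\|\vect{d}(\tau,\vect{e}_h)\|\le\tfrac12 ab\|\vect{e}_h\|^2\le\mu\|\vect{e}_h\|$ with $\mu=\tfrac12 abR$ as in \eqref{eq:mu_def} (this linear-in-$\|\vect{e}_h\|$ bound is how the paper secures uniformity as $\vect{e}_h\to\vect{0}$, where you instead rescale to the unit sphere), and then reduces everything to the scalar strictly convex majorant $g(\tau;\mu)=\|\vect{I}-\tau\overline{\vect{K}}\|+\mu\tau^2$, taking $\bar\tau_o(\mu)=\operatorname{argmin}\, g$ and $\rho(\mu)=\min g<1$. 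The payoff of that route is constructiveness: $\left(\bar\tau_o(\mu),\rho(\mu)\right)$ are numerically computable and are exactly the objects used downstream to set the offline gain $k^*=\bar\tau_o(\mu)/T$ in \eqref{eq:k*}, to state the rates in Propositions \ref{prop:asympt_stable} and \ref{prop:quadr_conv}, and to refine $\mu$ in Remark \ref{rem:MC-mu}. Your rescaling-plus-compactness (tube/Lyapunov) argument buys a cleaner existence proof: it exhibits the limit field $-\overline{\vect{K}}\vect{v}$ explicitly, and the strict decrease of the norm on $[0,\bar\tau_o]$ yields the lower bound $\tau_o(\vect{e}_h)\ge\bar\tau_o$ directly — a point the paper in fact glosses over, since an upper bound $\|\vect{e}'(\tau;\vect{e}_h)\|\le g(\tau;\mu)\|\vect{e}_h\|$ alone does not constrain where the \emph{true} norm attains its minimum. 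The price is that your constants ($\delta$ from the tube argument, the speed bound $M$) are not computable, so your pair $\left(\bar\tau_o,\rho\right)$ could not drive the offline procedure that this proposition exists to support.

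One caveat: your fallback for the $\infty$-norm — transferring the $2$-norm estimate "through equivalence of the two norms, absorbing the equivalence constants into $\rho$" — does not work. The equivalence $\|\cdot\|_{\bm{D},\infty}\le\|\cdot\|_{\bm{D},2}\le\sqrt{m}\,\|\cdot\|_{\bm{D},\infty}$ turns a $2$-norm contraction factor $\rho_2<1$ only into $\|\vect{e}'(\bar\tau_o;\vect{e}_h)\|_{\bm{D},\infty}\le\sqrt{m}\,\rho_2\,\|\vect{e}_h\|_{\bm{D},\infty}$, which is a contraction only if $\rho_2<1/\sqrt{m}$; moreover, $2$-norm monotonicity gives neither the $\infty$-norm monotonicity needed for $\tau_o(\vect{e}_h)\ge\bar\tau_o$ (these quantities are defined through the $\infty$-norm in that case) nor confinement of the trajectory to the $\infty$-ball on which the full-rank assumption holds. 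Your primary suggestion for that case — the upper Dini derivative of $\max_j |\xi_j|/d_j^2$ over the active coordinates, made uniform by the same compactness and speed estimates — is sound and should be kept as the actual proof.
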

The proof is constructive and it is provided in the next subsection where we show how to numerically compute a pair $(\bar \tau_o, \rho)$ satisfying these relations. It is worth stressing that $\bar \tau_o$ is independent of $\vect{e}_h$, but, as we will see later on, it is possibly dependent on $\mathcal{B}_{*}(\vect{q}^r)$.

The variables $\bar \tau_o$ and $\tau_{o}(\vect{e}_h)$ allow to propose two different strategies to design the gain $k$. The first is based on the observation that if $k = \frac{\bar \tau_o}{T}$, then $\vect{e}'(\tau;\vect{e}_h) \rightarrow \vect{0} \, \,\,\,\,\, \forall\,\vect{e}_h$ at a convergence rate $\rho$.
In fact, from \eqref{eq:relation_q_qP} we have
$ \|\vect{e}_{h+1}\| =  \|\vect{e}'(kT;\vect{e}_h)\|  =  \|\vect{e}'(\bar \tau_o;\vect{e}_h)\| \leq \rho \|\vect{e}_h\|$. This suggests an \emph{offline procedure} to select $k$ that will be described in the next subsection \ref{subsec:offline_proc}. However, based on the definition of $\tau_o(\vect{e}_h)$, it might be likely that $\|\vect{e}'(\tau_o(\vect{e}_h);\vect{e}_h)\| < \|\vect{e}'(\tau_o^*;\vect{e}_h)\|$ for most $\vect{e}_h$. Therefore, an alternative approach is to select $k$ at each instant $h$ such that $k_h:= \frac{\tau_o(\vect{e}_h)}{T}$. This idea suggests an \emph{online strategy} that will be described in subsection \ref{subsec:online_proc}.
\vspace{-0.1cm}
\subsection{Off-line procedure (Stability and convergence rate)}
\label{subsec:offline_proc}

To exhibit a  pair $(\bar \tau_o, \rho)$ satisfying the relations in Proposition \ref{prop:tau_0-rho}, we start by considering a suitable expansion of the solution of \eqref{eqn:NL}. 
Recall that the solution of \eqref{eqn:NL} can also be written as:
$$ \vect{e}'(\tau;\vect{e}_h)= \vect{e}_h +\int_{0}^\tau \vect{f}(\vect{e}'(\tau';\vect{e}_h)) d\tau', \ \ 0\leq \tau< \tau_{s}(\vect{e}_h).$$
Then,
by using Taylor's theorem for multivariate functions with integral form of the remainder, it becomes	 
\begin{align}  \vect{e}'(\tau;\vect{e}_h)&= \vect{e}_h + \tau\, \vect{f}(\vect{e}'(0;\vect{e}_h)) + \nonumber\\
&+\tau^2 \int_{0}^1 (1-\epsilon) \frac{\partial \vect{f}(\vect{e}'(\epsilon \tau;\vect{e}_h))}{\partial \vect{e}'}\vect{f}(\vect{e}'(\epsilon \tau;\vect{e}_h)) d\epsilon \nonumber \\ 
&= (\vect{I}-\tau\overline{\vect{K}})\vect{e}_h+ \tau^2\vect{d}(\tau,\vect{e}_h) ,   \ \ 0\leq \tau< \tau_{s}(\vect{e}_h), \label{eq:reminder1}
\end{align}
where the reminder $\vect{d}$ is defined by the last equality. 
%
As a consequence of the properties \eqref{eqn:upper_bounds1} and \eqref{eqn:upper_bounds2}, we have that	 
	\begin{align*}
\| \vect{d}(\tau,\vect{e}_h)	\|&\leq\int_{0}^1 (1-\epsilon) \bigg\|\frac{\partial \vect{f}(\vect{e}'(\epsilon \tau);\vect{e}_h)}{\partial \vect{e}'}\bigg\| \| \vect{f}(\vect{e}'(\epsilon \tau);\vect{e}_h)\| d\epsilon \\
	&\leq \int_{0}^1 (1-\epsilon)ab \|\vect{e}_h\|^2 d\epsilon = \frac{1}{2}ab\|\vect{e}_h\|^2 
	\end{align*}
	where we have exploited the properties that, for all $\vect{e}$, $\vect{e}'$ such that $\vect{q}^r+\vect{e}$, $\vect{q}^r+\vect{e}'$ belong to $\mathcal{B}_{*}(\vect{q}^r)$,
\begin{eqnarray}
\| \vect{f}(\vect{e}'(\tau);\vect{e})\| \hspace{-0.2cm} &\leq& \hspace{-0.2cm} \| A^{\dagger}_{\vect{q}^r+\vect{e}'}\,A_{\vect{q}^r+\vect{e}}\,\overline{\vect{K}}\|\|\vect{e}\| = a \|\vect{e}\|, \   \ \ \ \  \label{eqn:upper_bounds1}\\
\left\| \frac{\partial \vect{f}(\vect{e}'(\tau);\vect{e})}{\partial \vect{e}'}\right\| \hspace{-0.2cm} &\leq& \hspace{-0.2cm} b \|\vect{e}\|\,
\label{eqn:upper_bounds2}
\end{eqnarray} 
for some $a,b>0$ since $\vect{f}$ and $\frac{\partial \vect{f}}{ \partial \vect{e}'}$  are continuous maps on a compact domain.
	
Now, notice that since $\vect{e}_h$ is such that $\vect{q}^r+ \vect{e}_h \in  \mathcal{B}_{*}(\vect{q}^r)$, then $\|\vect{e}_h\| \leq R$ and, hence, by defining,
\begin{align}\label{eq:mu_def}
\mu:=\frac{1}{2}abR,
\end{align}
we can write $\| \vect{d}(\tau,\vect{e}_h)\| \leq \mu \|\vect{e}_h\|$. 
From \eqref{eq:reminder1} and the upper bound on $\vect{d}$, it follows
\vspace{0cm}
\begin{align}
\| \vect{e}'(\tau;\vect{e}_h) \| &\leq \|(\vect{I}-\tau\overline{\vect{K}})\vect{e}_h\|+\mu \tau^2\|\vect{e}_h\| \nonumber\\
& \leq \left( \|(\vect{I}-\tau\overline{\vect{K}})\|+\mu \tau^2 \right)\,    \|\vect{e}_h\|
\label{eq:first_def_of_g}
\end{align}
for all $0\leq \tau< \tau_{s}(\vect{e}_h)$, where $\|(\vect{I}-\tau\overline{\vect{K}})\|$ denotes the matrix norm induced by the corresponding vector norm. In order to evaluate upper bounds for 
\ifdraft
\blue{\st{asymptotic stability and}}
\else 
\fi the convergence rate, we need to study the following function 
\vspace{0cm}
$$ g(\tau;\mu):= \|\vect{I}-\tau\overline{\vect{K}}\|+\mu\tau^2 $$ 
Since $\overline{\vect{K}}$ is diagonal we can write
\begin{align}\label{g_tau_mu}
g(\tau;\mu)
&= \max_i\frac{|1-\tau \overline{k}_i|}{d_i^2}+\mu\tau^2,
\end{align}
independently from the norm used.


\begin{figure}
	\centering	
	\includegraphics[width=.4\textwidth]{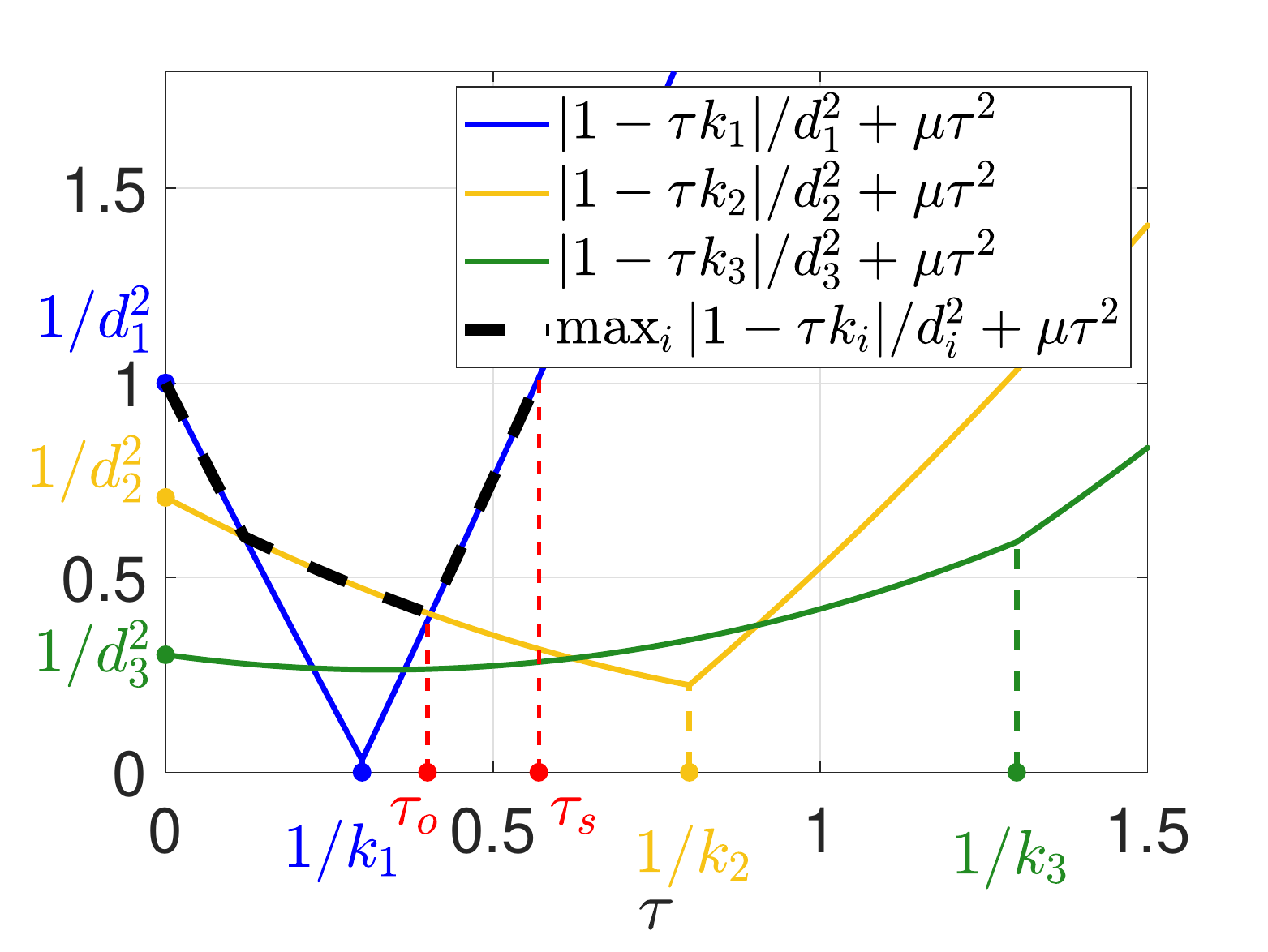}
	\caption{Representation of $\frac{|1-\tau \overline{k}_i|}{d_i^2}+\mu \tau^2$ and of the maximum.}
	\label{fig:g_ext_2}
\end{figure}

Observe that $g(\tau;\mu)$ is a strictly convex function and that,
from \eqref{eq:d_i}, it follows  
$
g(0;\mu)=\max_i\frac{1}{d_i^2} \leq 1.
$
Moreover, $
g'(0;\mu) < 0$, thus implying that $\min_{\tau\geq0} \,\, g(\tau;\mu)<1$. Based on the above properties of the function $g$ we can define
$$
\overline\tau_s(\mu) := \left\{\tau >0 \,|\, g(\tau;\mu)=1\right\},
$$
$$ 
\overline{\tau}_o(\mu) := \underset{0\leq \tau \leq \overline\tau_s(\mu)}{\text{argmin}}\,\,\,  g(\tau;\mu).
$$
A pictorial description of function $g$ and quantities $\overline\tau_s(\mu)$ and $\overline{\tau}_o(\mu)$ is provided in Figure \ref{fig:g_ext_2}.
Accordingly let 
$$
\rho(\mu)=g(\overline{\tau}_o(\mu);\mu).
$$
Since $\rho(\mu)<1$, it turns out that the pair $\left(\overline{\tau}_o(\mu), \rho(\mu)\right)$ satisfies the properties stated in Proposition \ref{prop:tau_0-rho}.

The strict-convexity property of the function $g$ allows to compute $\overline\tau_s(\mu)$, $\overline{\tau}_o(\mu)$ and $\rho(\mu)$ in a efficient way, by resorting to standard numerical toolboxes. 

The previous reasonings suggest that a possible choice for the gain $k$, once the sampling time $T$ is known, is given by
	\vspace{0cm}
	\begin{equation}\label{eq:k*}
	k^* = \frac{\overline{\tau}_o(\mu)}{T},
	\end{equation}
	as formally established in the next proposition.
	\begin{prop}\label{prop:asympt_stable}
		For all $ \vect{e}(0)$ such that $ \vect{e}(0)+ \vect{q}^r \in \mathcal{B}_{*}(\vect{q}^r)$, the following inequality holds
		$$\|{\vect{e}(hT)}\|\leq \rho^h(\mu)\|{\vect{e}(0)}\|
		$$
		and $\|{\vect{e}(t)}\|\leq \|{\vect{e}(hT)}\|$ for all $hT \leq t < (h+1)T$.
	\end{prop}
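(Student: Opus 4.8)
The plan is to reduce everything to the gain- and period-independent auxiliary flow through the rescaling identity \eqref{eq:relation_q_qP}, and then feed in the quantitative bound \eqref{eq:first_def_of_g} together with the two key properties of the scalar function $g(\tau;\mu)$. First I would record the correspondence induced by the offline choice $k=k^*=\overline{\tau}_o(\mu)/T$: under this choice the rescaled time $\tau=k^*(t-hT)$ sweeps exactly the interval $[0,\overline{\tau}_o(\mu))$ as $t$ runs over $[hT,(h+1)T)$, with $\tau=0$ corresponding to $t=hT$ (so $\vect{e}(hT)=\vect{e}'(0;\vect{e}_h)=\vect{e}_h$) and $\tau\to\overline{\tau}_o(\mu)$ as $t\to(h+1)T$. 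Thus $\vect{e}(t)=\vect{e}'(\tau;\vect{e}_h)$ throughout each sampling window, and all the work is transferred onto $\vect{e}'(\tau;\vect{e}_h)$ for $\tau\in[0,\overline{\tau}_o(\mu)]$.

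For the geometric decay I would evaluate the identity at the endpoint: by continuity of the solution, $\vect{e}_{h+1}=\vect{e}'(\overline{\tau}_o(\mu);\vect{e}_h)$, and \eqref{eq:first_def_of_g} yields $\|\vect{e}_{h+1}\|\le g(\overline{\tau}_o(\mu);\mu)\,\|\vect{e}_h\|=\rho(\mu)\,\|\vect{e}_h\|$. A one-line induction on $h$ then gives $\|\vect{e}(hT)\|\le\rho^h(\mu)\,\|\vect{e}(0)\|$. Since $\rho(\mu)<1$, each iterate has strictly smaller (weighted) norm, so $\vect{q}(hT)$ never leaves $\mathcal{B}_{*}(\vect{q}^r)$; this is what keeps Assumption \ref{ass:fullRankSystem} (full rank of $\vect{A_q}$ and smoothness of $\vect{h}$) in force at every sampling instant and makes the induction self-consistent, i.e. each new $\vect{e}_h$ is again an admissible initial condition for the auxiliary system.

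For the intra-sample monotonicity I would exploit the shape of $g$. It is strictly convex with $g(0;\mu)=\max_i d_i^{-2}\le 1$ by \eqref{eq:d_i} and $g'(0;\mu)<0$, so its minimizer $\overline{\tau}_o(\mu)$ is interior and $g$ is strictly decreasing on $[0,\overline{\tau}_o(\mu)]$; consequently $g(\tau;\mu)\le g(0;\mu)\le 1$ on that interval. Combining with \eqref{eq:first_def_of_g} gives, for every $\tau\in[0,\overline{\tau}_o(\mu))$, hence every $t\in[hT,(h+1)T)$, the chain $\|\vect{e}(t)\|=\|\vect{e}'(\tau;\vect{e}_h)\|\le g(\tau;\mu)\,\|\vect{e}_h\|\le\|\vect{e}_h\|=\|\vect{e}(hT)\|$, which is precisely the claimed monotone non-increase within each window.

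The point requiring the most care---and the one I expect to be the main obstacle---is justifying that the auxiliary solution $\vect{e}'(\tau;\vect{e}_h)$ exists and that the estimate \eqref{eq:first_def_of_g} (which is only asserted for $\tau<\tau_s(\vect{e}_h)$) remains valid on the entire closed interval up to $\overline{\tau}_o(\mu)$, uniformly over all admissible $\vect{e}_h$. This is exactly what Proposition \ref{prop:tau_0-rho} supplies, since it gives $\overline{\tau}_o(\mu)=\bar{\tau}_o\le\tau_o(\vect{e}_h)\le\tau_s(\vect{e}_h)$ for every $\vect{e}_h$ with $\vect{q}^r+\vect{e}_h\in\mathcal{B}_{*}(\vect{q}^r)$; hence $[0,\overline{\tau}_o(\mu)]$ lies within the range on which the trajectory stays inside the ball (so $\|\vect{e}'(\tau;\vect{e}_h)\|\le\|\vect{e}_h\|\le R$) and on which the Taylor-remainder bounds \eqref{eqn:upper_bounds1}--\eqref{eqn:upper_bounds2} hold. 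I would invoke Proposition \ref{prop:tau_0-rho} at this step to close the gap, after which the two displayed inequalities follow immediately.
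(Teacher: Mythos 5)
Your proof is correct and follows essentially the same route as the paper: the rescaling identity \eqref{eq:relation_q_qP} with $k^*T=\overline{\tau}_o(\mu)$, the bound \eqref{eq:first_def_of_g} together with the convexity properties of $g$, and Proposition \ref{prop:tau_0-rho} (whose constructive proof is exactly this derivation) to guarantee existence up to $\overline{\tau}_o(\mu)$. The paper leaves this argument implicit in the surrounding text rather than writing a proof block, and your version merely makes explicit the induction, the invariance of $\mathcal{B}_{*}(\vect{q}^r)$, and the intra-sample monotonicity via $g(\tau;\mu)\leq 1$.
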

	Notice that, from Prop.\ref{prop:asympt_stable}, it turns out that the origin is an asymptotically stable equilibrium for the system and the proposed \emph{offline} strategy converges exponentially fast with a rate at least $\rho(\mu)$.	We conclude this Section observing that, since $\mu$ can be computed a-priori before running the algorithm, then the offline strategy is amenable of the sparse implementation \eqref{eqn:sparse}.

\begin{remark}[\bf A numerical refinement of the upper bound on $\vect{d}$]\label{rem:MC-mu}
Note that $\mu$, as defined in \eqref{eq:mu_def}, represents a rough estimate of the upper bound of $\| \vect{d}(\tau,\vect{e}_h)\| $. However this estimate can be refined as follows. Let
\vspace{0cm}
$$\vect{d}'(\tau, \vect{e}_h):=\tau^2\vect{d}(\tau, \vect{e}_h)\overset{\eqref{eq:reminder1}}{=}\vect{e}'(\tau, \vect{e}_h)-(\vect{I}-\tau\overline{\vect{K}})\vect{e}_h,$$
and
\vspace{-0.3cm}
\begin{align}
\mu^*& :=\inf_{\gamma}\{ \gamma \,\, |\,\, \|\vect{d}'(\tau, \vect{e}_h)\| \leq \gamma \|\vect{e}_h\|\tau^2, \, \label{eq:mu_star}\\
&\qquad \qquad \qquad \forall \, \vect{e}_h, \,\,\, \forall \tau \in (0,\tau_s(\vect{e}_h))\}\nonumber.
\end{align}
Notice that $\mu^* \leq \mu$.
An estimate $\hat{\mu}^*$ of $\mu^*$ can be computed adopting a Monte-Carlo sampling approach which consists in calculating the right-hand side of \eqref{eq:mu_star} over a finite number of trajectories starting from initial conditions randomly sampled within $\mathcal{B}_{*}(\vect{q})$. Clearly $\hat{\mu}^* \rightarrow \mu^*$ as the number of initial samples increases. 

%

If $\hat{\mu}^*$ has been computed using a sufficiently high number of initial samples, we could replace $\mu$ with $\hat{\mu}^*$ in the expression of function $g$ thus obtaining a pair $\left(\overline{\tau}_o\left(\hat{\mu}^*\right), \rho\left(\hat{\mu}^*\right)\right)$ which should lead to better performance. In particular, it is easy to see that, since $\hat{\mu}^*\leq \mu$, then $\rho\left(\hat{\mu}^*\right) \leq \rho\left(\mu\right)$.
\end{remark}


		\subsection{Online model-predictive procedure}	
	\label{subsec:online_proc}
	In this section we assume that the future trajectory $\vect{e}'(\tau, \vect{e}_h)$ can be numerically computed based on the model dynamics $\vect{f}(\vect{q};\vect{e}_h)$ and the current position $\vect{e}_h$. This implies that also $\tau_o(\vect{e}_h)$ can be computed at any time step $h$. If so, we can propose the following input
	$$ \vect{u}(t) = \vect{u}_h =  -k_h \vect{A}_{\vect{e}_h} \overline{\vect{K}}\vect{e}_h ,  \qquad \qquad hT \leq t <(h+1)T, $$
	where
	\begin{equation}\label{eq:k_h}
	k_h :=  \frac{\tau_o(\vect{e}_h)}{T}.
	\end{equation}
	A more precise characterization of the convergence properties of this strategy is stated in the next proposition.
	\begin{prop}\label{prop:quadr_conv}
		Consider the system in \eqref{eq:sampled_dyn} with a time varying sequence of gains $k_0, k_1, k_2, \ldots$,  where the generic $k_h$ is given as in \eqref{eq:k_h}. Then we have that
		$$\|\vect{e}(t)\|\leq \|\vect{e}(hT)\|$$
		for all $hT \leq t < (h+1)T$, and 
		$$
		\|\vect{e}(hT)\| \leq\, \rho^h(\mu)\, \| \vect{e}(0)\|.
		$$
		Remarkably, the convergence is at least quadratic if one of the following two facts is verified
		\begin{itemize}
			\item the weights of the matrix $\vect{D}$ are all equal with each other, that is, 
			$$d_1= d_2=\ldots= d_N=d_L;$$
			\item the gains defining the matrix $\overline{\vect{K}}$ are all equal with each other, that is,
			$$k_1= k_2=\ldots= k_N=k_L.$$
		\end{itemize} 
	\end{prop}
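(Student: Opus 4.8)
The plan is to reduce everything to the autonomous auxiliary flow $\vect{e}'(\tau;\vect{e}_h)$ studied in Section~\ref{subsec:offline_proc}, through the time–rescaling identity \eqref{eq:relation_q_qP}. With the online gain $k_h=\tau_o(\vect{e}_h)/T$ one has $k_hT=\tau_o(\vect{e}_h)$, so evaluating \eqref{eq:relation_q_qP} at $t=(h+1)T$ gives the exact one–step update $\vect{e}_{h+1}=\vect{e}'(\tau_o(\vect{e}_h);\vect{e}_h)$. Hence all three claims become statements about a single pass of the auxiliary flow stopped at its norm–minimizing time $\tau_o$, and I can reuse the bounds already proved for $\vect{e}'(\cdot;\vect{e}_h)$.

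For the within–interval monotonicity, observe that for $hT\le t<(h+1)T$ the rescaled time $k_h(t-hT)$ ranges over $[0,\tau_o(\vect{e}_h))$, and $\tau_o(\vect{e}_h)\le\tau_s(\vect{e}_h)$ by definition. Since the flow points strictly inward at $\tau=0$ and $\tau_s$ is the first return to the level $\|\vect{e}_h\|$, we have $\|\vect{e}'(\tau;\vect{e}_h)\|\le\|\vect{e}_h\|$ for every $\tau\in[0,\tau_s(\vect{e}_h)]$, and substituting $\tau=k_h(t-hT)$ yields $\|\vect{e}(t)\|\le\|\vect{e}(hT)\|$. For the linear rate I use minimality of $\tau_o$ together with Proposition~\ref{prop:tau_0-rho}: since $\overline{\tau}_o(\mu)\le\tau_o(\vect{e}_h)\le\tau_s(\vect{e}_h)$, the value $\overline{\tau}_o(\mu)$ is an admissible competitor, so
\begin{align*}
\|\vect{e}_{h+1}\|=\|\vect{e}'(\tau_o(\vect{e}_h);\vect{e}_h)\|\le\|\vect{e}'(\overline{\tau}_o(\mu);\vect{e}_h)\|\le\rho(\mu)\|\vect{e}_h\|,
\end{align*}
and iterating gives $\|\vect{e}(hT)\|\le\rho^h(\mu)\|\vect{e}(0)\|$. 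The only new ingredient beyond the offline analysis is that the online $\tau_o$ can only improve on the fixed $\overline{\tau}_o(\mu)$.

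The quadratic claim is the crux and requires the \emph{sharp} form of the remainder. Returning to the Taylor expansion \eqref{eq:reminder1} and keeping the genuinely quadratic bound coming from \eqref{eqn:upper_bounds1}--\eqref{eqn:upper_bounds2}, namely $\|\vect{d}(\tau,\vect{e}_h)\|\le\tfrac12 ab\|\vect{e}_h\|^2$ (before it is weakened to $\mu\|\vect{e}_h\|$ via $\|\vect{e}_h\|\le R$), one obtains for any admissible trial time $\tau^\ast\le\tau_s(\vect{e}_h)$
\begin{align*}
\|\vect{e}_{h+1}\|\le\|\vect{e}'(\tau^\ast;\vect{e}_h)\|\le\|(\vect{I}-\tau^\ast\overline{\vect{K}})\vect{e}_h\|+\tfrac12 ab\,(\tau^\ast)^{2}\|\vect{e}_h\|^2,
\end{align*}
where the first inequality is again minimality of $\tau_o$. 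Everything then hinges on exhibiting an admissible $\tau^\ast$ at which the first–order vector $(\vect{I}-\tau^\ast\overline{\vect{K}})\vect{e}_h$ is driven to order $\|\vect{e}_h\|^2$. When $k_1=\dots=k_L=\overline k$ this is immediate: $\overline{\vect{K}}=\overline k\,\vect{I}$, so $\tau^\ast=1/\overline k$ annihilates the first–order term identically and leaves $\|\vect{e}_{h+1}\|\le\tfrac{ab}{2\overline k^{2}}\|\vect{e}_h\|^2$. Admissibility $\tau^\ast\le\tau_s(\vect{e}_h)$ is secured for small $\|\vect{e}_h\|$ by a short contradiction: if $\tau_s(\vect{e}_h)<1/\overline k$ then, evaluating the same expansion at $\tau_s$ and using $\|\vect{e}'(\tau_s;\vect{e}_h)\|=\|\vect{e}_h\|$ with $1-\tau_s\overline k>0$, one derives $\overline k^{2}<\tfrac12 ab\|\vect{e}_h\|$, impossible once $\|\vect{e}_h\|$ lies below the explicit threshold $2\overline k^2/(ab)$; by the second claim the iterates enter this regime after finitely many steps, so the convergence is eventually quadratic.

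The remaining case $d_1=\dots=d_L$ is where I expect the real difficulty. Here the weighting collapses, $\|\cdot\|$ reduces to a scalar multiple of the plain Euclidean (resp. sup) norm, and the minimizer $\tau_o$ is characterized by the orthogonality condition $\langle\vect{e}'(\tau_o),\vect{f}(\vect{e}'(\tau_o))\rangle=0$. The intended route is once more to produce an admissible $\tau^\ast$ for which the first–order residual is second order, now leaning on the symmetry of the unweighted norm rather than on a literal common gain. Controlling that residual --- establishing that it is genuinely $O(\|\vect{e}_h\|^2)$ and not merely $O(\|\vect{e}_h\|)$, since for distinct $\overline k_i$ no single $\tau^\ast$ zeroes $(\vect{I}-\tau^\ast\overline{\vect{K}})\vect{e}_h$ --- together with the admissibility $\tau^\ast\le\tau_s(\vect{e}_h)$, is the main obstacle and the step that must carry the full weight of the equal–weights hypothesis; once such a $\tau^\ast$ is available, the conclusion follows verbatim from the displayed bound above.
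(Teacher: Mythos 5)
Your first two claims and your treatment of the equal-gains case are correct and essentially follow the paper's route. The chain $\|\vect{e}_{h+1}\|=\|\vect{e}'(\tau_o(\vect{e}_h);\vect{e}_h)\|\le\|\vect{e}'(\overline{\tau}_o(\mu);\vect{e}_h)\|\le\rho(\mu)\|\vect{e}_h\|$, justified by minimality of $\tau_o$ and Proposition~\ref{prop:tau_0-rho}, is exactly the paper's first step (the paper does not even spell out the within-interval monotonicity, which you do). For $k_1=\cdots=k_N=k_L=\overline{k}$ your argument is complete and in fact cleaner than the paper's: you pick $\tau^\ast=1/\overline{k}$ to annihilate the first-order term, keep the sharp remainder bound $\frac{1}{2}ab\|\vect{e}_h\|^2$, and secure admissibility $\tau^\ast\le\tau_s(\vect{e}_h)$ by contradiction below the explicit threshold $\|\vect{e}_h\|<2\overline{k}^2/(ab)$; the paper instead lets the remainder constant shrink along the iteration, $\mu_h=\frac{1}{2}ab\|\vect{e}_h\|\to 0$, and reads the quadratic rate off the asymptotics $\overline{\tau}_o(\mu)\to 1/\overline{k}$, $\rho(\mu)\to\mu/\overline{k}^2$ of the scalar function $g$ in \eqref{g_tau_mu}.

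The genuine gap is the bullet you leave open, $d_1=\cdots=d_L=\bar d$ with distinct gains $\overline{k}_i$ --- but your instinct that no admissible $\tau^\ast$ makes the first-order residual $O(\|\vect{e}_h\|^2)$ is correct, and the gap cannot be closed by the paper's method either. The paper handles this case by asserting $\max_i|1-\tau\overline{k}_i|=|1-\tau\overline{k}_{min}|$ when the $d_i$ are equal, and then sending $\mu\to 0$ as above; but that identity holds only for $\tau\le 2/(\overline{k}_{min}+\overline{k}_{max})$, whereas the purported minimizer $1/\overline{k}_{min}$ lies beyond this range whenever the gains differ. In truth $\min_\tau\max_i|1-\tau\overline{k}_i|\ge(\overline{k}_{max}-\overline{k}_{min})/(\overline{k}_{max}+\overline{k}_{min})>0$, so $\rho(\mu)$ stays bounded away from zero as $\mu\to 0$ and no quadratic rate can be extracted from $g$. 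Indeed the claim itself appears false in this case: take the unweighted $2$-norm, $\overline{k}_1=1$, $\overline{k}_2=2$, and $\vect{e}_h\propto(1,1)^\top$; to first order $\min_\tau\|(\vect{I}-\tau\overline{\vect{K}})\vect{e}_h\|=\|\vect{e}_h\|/\sqrt{10}$ (at $\tau=3/5$), the minimizing direction is $\propto(2,-1)^\top$, which the next step maps back to $\propto(1,1)^\top$, so the online iteration contracts by the fixed factor $1/\sqrt{10}$, up to $O(\|\vect{e}_h\|^2)$ corrections, at every step --- linear, not quadratic, convergence. So what your proposal actually proves is the proposition restricted to the equal-gains bullet, which is also the only bullet for which the paper's own proof is valid; flagging the other bullet as unprovable (rather than as a difficulty you expect to overcome) would have been the accurate conclusion.
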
 
	\begin{proof}
		Observe that, according to \eqref{eq:k_h}, we necessarily have:
		\begin{align*}
		\|\vect{e}_{h+1}\| = \|\vect{e}'(\tau_o(\vect{e}_h);\vect{e}_h)\| 
		\leq \|\vect{e}'(\overline{\tau}_o;\vect{e}_h)\| \leq \rho \|\vect{e}_h\|,
		\end{align*}
		hence the proposed scheme is exponentially stable with rate $\rho$ for any $T$. This proves the first part of the Proposition. 
		
		Consider now the case where all the $d_i$ are equal to the same value $\bar{d}$. In this case we have
		that \eqref{g_tau_mu} can be rewritten as
		\begin{align*}
g(\tau;\mu)
&= \max_i\frac{|1-\tau \overline{k}_i|}{\bar{d}}+\mu\tau^2= \frac{|1-\tau \overline{k}_{min}|}{\bar{d}}+\mu\tau^2
\end{align*}
where $\overline{k}_{min} = \min_{i} k_i$.

Since in the online scenario $r=\|\vect{e}_h\| \rightarrow 0$, then $\mu= \frac{1}{2}ab \|\vect{e}_h\| \rightarrow 0$. This implies that $\overline{\tau}_o(\mu) \to 1/ \bar{k}_{min}$ and, in turn, $\rho(\mu) \to \frac{1}{\bar{k}_{min}^2} \mu$. As so, $\|\vect{e}_{h+1}\|\leq \rho\|\vect{e}_h\| \to \frac{1}{2 \bar{k}_{min}^2}ab\|\vect{e}_h\|^2$.

		As a consequence		 
		\begin{equation*}
		\limsup_{h \rightarrow +\infty}{\frac{\|\vect{e}_{h+1}\|}{\|\vect{e}_h\|^2}}\leq\frac{1}{2  \bar{k}_{min}^2}ab, 
		\end{equation*}
		and $\|\vect{e}_{h}\|\leq (\frac{1}{2  \bar{k}_{min}^2}ab\|\vect{e}_0\|)^{(2^h-1)}\|\vect{e}_0\|$.
		Since $ab>0$, then the quadratic convergence of the sequence $\|\vect{e}_h\|$ is guaranteed. 
		
		Similar reasonings hold for the scenario where the gains defining the matrix $\overline{\vect{K}}$ are all equal with each other.
	\end{proof}
	Based on the definition of $\tau_o(\vect{e}_h)$ and on Prop.~\ref{prop:quadr_conv}, we expect the online strategy to exhibit a faster convergence than the offline one. This fact is supported also by the numerical results reported in the next section. 
	However, the higher rate of convergence comes at the price of a heavier computational load. Indeed $\tau_o(\vect{e}_h)$ needs to be estimated at each iteration and a global knowledge of the vector $\vect{e}_h$ is required. In particular the online strategy exhibits the structure of the control architecture in \eqref{eqn:centralized}; this implies that the online strategy cannot be implemented in a decentralized way, but only in a centralized fashion.

	\section{Simulation Results}\label{sec:simul}
In the first part of this section we discuss the results obtained by simulating the off-line and on-line techniques applied to the kinematic model of the system depicted in Figure \ref{fig:squared_flycrane}. In detail, we assume that the sampling time is $T=1.5 \ [s]$ and the gain and weight matrices are, respectively, $\bar{\vect{K}}=\vect{I}_{10}$ and $\vect{D}=\text{diag}\{\vect{I}_4, \ 4 \cdot \vect{I}_6\}$. In this case we chose to differently weight the variables $q_i, \, i=1,2,3,4$ with respect to $\vect{q}_L$. 
\begin{figure}[h!]
	\centering	
	\includegraphics[width=0.5\textwidth]{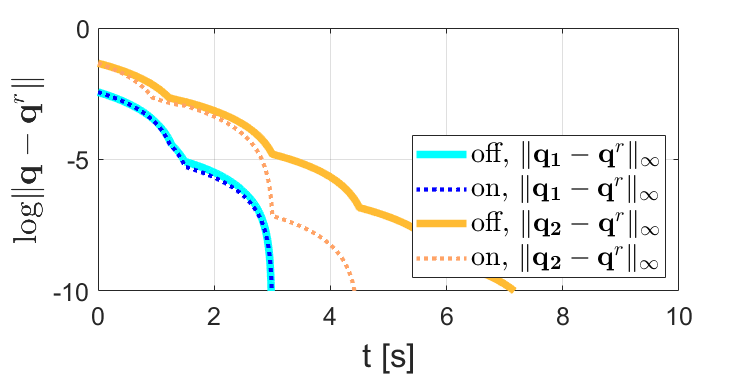}
	\caption{Evolution of the error norm $\|\vect{q}-\vect{q}^r\|_\infty$ for different initial conditions ($\vect{q}_1$, $\vect{q}_2$) and techniques (on-line and off-line) when the robots velocities generated by the planner are applied to the kinematic system. In this case
		$\vect{K}=\vect{I}_{10}$, $\vect{D}=\text{diag}\{\vect{I}_4, 4\cdot \vect{I}_6\}$.}
	\label{fig:ON_OFF_Inf_norm_diff_D_K_eye_diff_q0}
\end{figure}
In Figure \ref{fig:ON_OFF_Inf_norm_diff_D_K_eye_diff_q0} we compare the behavior of the system when the norm $\| \cdot \|_{\vect{D},\infty}$ is considered and the off-line and on-line strategies are compared. In addition, two different initial conditions $\vect{q}_1$ and $\vect{q}_2$ are taken into account, where $\vect{q}_1$ is significantly closer to $\vect{q}^r$ than $\vect{q}_2$. To compute $\mu$ we assume that, when $\vect{q}(0)=\vect{q}_1$ then $\mathcal{B}_*(\vect{q}^r)$ is determined by $R= \|\vect{q}_1\|$, while when $\vect{q}(0)=\vect{q}_2$ then $R= \|\vect{q}_2\|$. Moreover the parameter $\mu$ is evaluated according to the procedure described in Remark \ref{rem:MC-mu}.
Figure \ref{fig:ON_OFF_Inf_norm_diff_D_K_eye_diff_q0} shows that, when $\vect{q}(0)=\vect{q}_1$, then the two strategies have more or less the same performance and, remarkably, they exhibit a convergence that is faster than linear. 
Instead, the difference between the two strategies is evident when $\vect{q}(0)=\vect{q}_2$. This is due to the fact that the value of $\mu$ is significantly smaller when $\vect{q}(0)=\vect{q}_1$ than when $\vect{q}(0)=\vect{q}_2$; as a consequence, also the minimum of $g$ is smaller and the convergence rate is closer to the one of the on-line scenario. Similar results are obtained using the weighted 2-norm and are not report here in the interest of space.

Finally, in Figure \ref{fig:q_err_norm_varying_alpha} we plotted the results obtained by simulating the  dynamical model of the system \eqref{eq:joint_dyn_eq} when the sampled control law in \eqref{eq:SampledControlLaw} is applied: the goal is to show numerically the validity of Theorem \ref{thm:sing_pert_sys_adapted} also in presence of sampled measurements. Here we only consider the $2$-norm, indeed the $\infty$-norm would give the same result. Consistently with the result of Theorem \ref{thm:sing_pert_sys_adapted}, the picture highlights that for large values of $\alpha$ the convergence of the system is achieved, while for lower values the norm of the error is not guaranteed to converge to zero. 
\begin{figure}
	\centering	
	\includegraphics[width=0.5\textwidth]{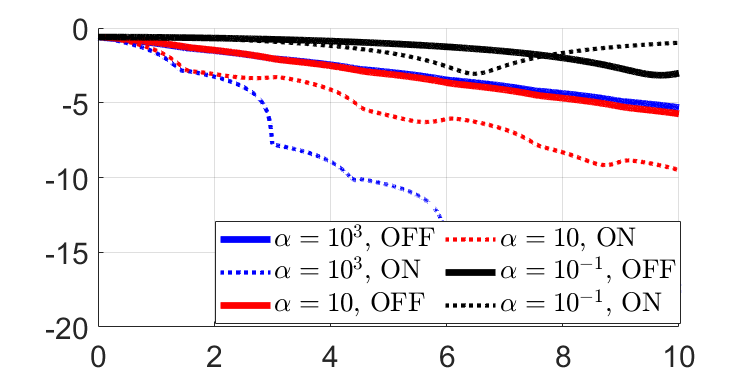}
	\caption{Evolution of the error norm $\|\vect{q}-\vect{q}^r\|_2$ when the on-line and off-line techniques are applied: moreover, three different values of the gain $\alpha$ are chosen.}
	\label{fig:q_err_norm_varying_alpha}
\end{figure}
	

	\section{Conclusions}\label{sec:concl}
		\vspace{-0.1cm}


	\appendices
	\section{Theorem 11.4 of \cite{khalil2002nonlinear}}
	\label{app:stand_sing_pert_model}
	\begin{thm}\label{thm:Khalil}
		Consider the singularity perturbed system
		\begin{align}
		\dot{x}&=f(t,x,z,\varepsilon)\label{eq:sing_pert_sys1}\\
		\varepsilon\dot{z}&=g(t,x,z,\varepsilon)
		\label{eq:sing_pert_sys2}
		\end{align}
		Assume that the following assumptions are satisfied for all
		$$(t,x,z,\varepsilon)\in [0,\infty) \, \times \, B_r \, \times \, [0,\varepsilon_0]$$
		\begin{enumerate}
			\item $f(t,0,0,\varepsilon)=0$ and $g(t,0,0,\varepsilon)=0$.
			\item The equation
			$$0=g(t,x,z,0)$$
			has an isolated root $z=h(t,x)$ such that $h(t,0)=0$.
			\item  The functions $f,g,h$ and their partial derivatives up to the second order are bounded for $y=z-h(t,x)$.
			\item The origin of the reduced system 
			$$\dot{x}=f(t,x,h(t,x),0)$$
			is exponentially stable.
			\item The origin of the boundary-layer system
			$$\frac{dy}{d\tau}=g(t,x,y+h(t,x),0)$$
			is exponentially stable, uniformly in $(t,x)$.
		\end{enumerate}
	Then, there exists $\varepsilon^*>0$ such that for all $\varepsilon<\varepsilon^*$, the origin of \eqref{eq:sing_pert_sys1}-\eqref{eq:sing_pert_sys2} is exponentially stable.
	\end{thm}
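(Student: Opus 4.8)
The plan is to recast the closed-loop dynamics as a standard singularly perturbed system and then invoke Theorem~\ref{thm:Khalil} (Theorem~11.4 of \cite{khalil2002nonlinear}). First I would set the small parameter $\varepsilon = 1/\alpha$ and introduce the slow/fast split $\vect{x} = \vect{q} - \vect{q}^r$, $\vect{z} = \dot{\vect{q}}$. Substituting the force law \eqref{eq:Fdef} and the controller \eqref{eq:ctrl} into \eqref{eq:dyn_model} and using $\vect{A}_\vect{q}^\top \vect{u} = -\vect{A}_\vect{q}^\top \vect{A}_\vect{q}\,\vect{K}\vect{x}$ gives the two-time-scale form $\dot{\vect{x}} = \vect{z}$, $\varepsilon \dot{\vect{z}} = \vect{g}(\vect{x},\vect{z},\varepsilon)$ with $\vect{g}(\vect{x},\vect{z},\varepsilon) = -\vect{M}^{-1}\vect{A}_\vect{q}^\top\vect{A}_\vect{q}\,(\vect{z} + \vect{K}\vect{x}) - \varepsilon \vect{M}^{-1}\vect{C}\,\vect{z}$, which matches the template $\dot{\vect{x}} = \vect{f}(\vect{z})$, $\varepsilon\dot{\vect{z}} = \vect{g}(\vect{x},\vect{z},\varepsilon)$ of Theorem~\ref{thm:Khalil}. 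Here $\vect{x}$ is the slow configuration error steered by the kinematic controller and $\vect{z}$ the fast velocity that the damping gain $\alpha$ forces quickly onto its quasi-steady value.

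Next I would check the five hypotheses one at a time. The equilibrium condition $\vect{f}(\vect{0}) = \vect{0}$ and $\vect{g}(\vect{0},\vect{0},\varepsilon) = \vect{0}$ holds by inspection. For the isolated-root hypothesis I would put $\varepsilon = 0$ and solve $\vect{0} = \vect{g}(\vect{x},\vect{z},0)$; since $\vect{M}(\vect{q})$ and $\vect{A}_\vect{q}^\top\vect{A}_\vect{q}$ are positive definite under the full-rank Assumption~\ref{ass:fullRankSystem}, this collapses to $\vect{z} + \vect{K}\vect{x} = \vect{0}$, i.e. the isolated root $\vect{z} = \vect{h}(\vect{x}) = -\vect{K}\vect{x}$ with $\vect{h}(\vect{0}) = \vect{0}$. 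The boundedness of $\vect{f},\vect{g},\vect{h}$ and their derivatives up to second order follows from the three-times differentiability of $\vect{h}$ and the full-rank property of Assumption~\ref{ass:fullRankSystem}, which hold on the compact closure $\overline{\mathcal{B}}_{*}(\vect{q}^r)$. Substituting the root yields the reduced system $\dot{\vect{x}} = \vect{f}(\vect{h}(\vect{x})) = -\vect{K}\vect{x}$, which is exponentially stable because $\vect{K} = k\overline{\vect{K}}$ is positive definite; this is precisely the kinematic guarantee of Proposition~\ref{prop:convergence_q} and equation \eqref{eq:controlled_kin}.

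The step I expect to be the crux is the fifth hypothesis: exponential stability of the boundary-layer system $\tfrac{d\vect{y}}{d\tau} = \vect{g}(\vect{x},\vect{y}+\vect{h}(\vect{x}),0)$, uniformly in $\vect{x}$. Substituting the root cancels the $\vect{K}\vect{x}$ term and leaves the $\vect{y}$-linear system $\tfrac{d\vect{y}}{d\tau} = -\vect{M}^{-1}\vect{A}_\vect{q}^\top\vect{A}_\vect{q}\,\vect{y}$, whose stability requires $-\vect{M}^{-1}\vect{A}_\vect{q}^\top\vect{A}_\vect{q}$ to be Hurwitz for every frozen $\vect{x}$. The key observation is that, although $\vect{M}^{-1}\vect{A}_\vect{q}^\top\vect{A}_\vect{q}$ is not symmetric, it is similar (through $\vect{M}^{1/2}$) to the symmetric positive-definite matrix $(\vect{A}_\vect{q}\vect{M}^{-1/2})^\top(\vect{A}_\vect{q}\vect{M}^{-1/2})$, so its eigenvalues are real and strictly positive and $-\vect{M}^{-1}\vect{A}_\vect{q}^\top\vect{A}_\vect{q}$ is Hurwitz. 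Uniformity in $\vect{x}$ then follows because these eigenvalues depend continuously on $\vect{q}$ and $\overline{\mathcal{B}}_{*}(\vect{q}^r)$ is compact, so the smallest eigenvalue is bounded away from zero over the whole set.

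With all five hypotheses in place, Theorem~\ref{thm:Khalil} supplies a Lyapunov function $V(\vect{x})$ for the reduced system and $W(\vect{x},\vect{z})$ for the boundary layer, and I would form the composite candidate $\nu(\vect{x},\vect{z}) = V(\vect{x}) + W(\vect{x},\vect{z})$, which is a Lyapunov function for the full system once $\varepsilon$ is small enough, i.e. $\alpha > \overline{\alpha} := 1/\overline{\varepsilon}$. To recover both claims simultaneously I would take $\bar{\mathcal{B}}(\vect{q}^r)$ to be a sublevel set of $\nu$ contained in $\mathcal{B}_{*}(\vect{q}^r)$: the strict negativity of $\dot{\nu}$ on this set makes it positively invariant, giving claim~1 (so $\vect{q}(t)$ never leaves $\bar{\mathcal{B}}(\vect{q}^r) \subseteq \mathcal{B}_{*}(\vect{q}^r)$ and $\vect{A}_\vect{q}$ remains full rank along the trajectory), while the quadratic upper and lower bounds on $\nu$ together with the negative-definite bound on $\dot{\nu}$ yield exponential decay of $\|(\vect{x},\vect{z})\|$, hence of $\vect{q}(t) - \vect{q}^r$, establishing claim~2.
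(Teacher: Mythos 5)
Your proposal does not prove the statement in question. The statement is Theorem~\ref{thm:Khalil} itself --- the \emph{general} singular-perturbation result (Theorem~11.4 of \cite{khalil2002nonlinear}), quoted in the paper's appendix for completeness and never proved there (the paper simply cites Khalil). What you have written is instead a proof of the paper's Theorem~\ref{thm:sing_pert_sys_adapted}: you recast the specific closed-loop manipulation dynamics in two-time-scale form, verify the five hypotheses (equilibrium condition, isolated root $\vect{z}=-\vect{K}\vect{x}$, smoothness, exponential stability of the reduced system $\dot{\vect{x}}=-\vect{K}\vect{x}$, and the similarity argument showing $-\vect{M}^{-1}\vect{A}_{\vect{q}}^\top\vect{A}_{\vect{q}}$ is Hurwitz), and then \emph{invoke} Theorem~\ref{thm:Khalil} to conclude. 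As a proof of Theorem~\ref{thm:Khalil} this is circular: the theorem you are asked to establish appears as the key ingredient of your own argument. (Incidentally, your verification steps do track the paper's proof of Theorem~\ref{thm:sing_pert_sys_adapted} almost line by line --- the state-space split $\vect{x}=\vect{q}-\vect{q}^r$, $\vect{z}=\dot{\vect{q}}$, the root computation, the $\vect{M}^{-1/2}$ similarity trick, and the composite function $\nu=V+W$ with an invariant sublevel set --- so you have correctly reconstructed \emph{that} proof, just not the one requested.)

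A genuine proof of Theorem~\ref{thm:Khalil} cannot take the system-specific route at all; it must work at the level of the abstract hypotheses. Concretely: apply converse Lyapunov theorems to hypothesis~4 to obtain $V(t,x)$ with quadratic upper/lower bounds and $\dot{V}\leq -c_1\|x\|^2$ along the reduced dynamics, and to hypothesis~5 to obtain $W(t,x,y)$ with analogous bounds along the boundary-layer dynamics, uniformly in the frozen variables $(t,x)$; then form the composite candidate $\nu = (1-d)V + dW$ for some $d\in(0,1)$ and differentiate it along the trajectories of the \emph{full} system \eqref{eq:sing_pert_sys1}--\eqref{eq:sing_pert_sys2}. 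The crux --- entirely absent from your proposal --- is bounding the interconnection terms that arise because $x$ is not frozen in the true dynamics and $\varepsilon\neq 0$: terms involving $\partial h/\partial t$, $(\partial h/\partial x)f$, and the differences $f(t,x,y+h,\varepsilon)-f(t,x,h,0)$ and $g(t,x,y+h,\varepsilon)-g(t,x,y+h,0)$, all of which are controlled by the second-order boundedness required in hypothesis~3. This yields an estimate of the form
\begin{equation*}
\dot{\nu} \leq -\begin{bmatrix}\|x\| \\ \|y\|\end{bmatrix}^\top
\begin{bmatrix} a_1 & -a_3 \\ -a_3 & a_2/\varepsilon - a_4\end{bmatrix}
\begin{bmatrix}\|x\| \\ \|y\|\end{bmatrix},
\end{equation*}
and choosing $\varepsilon^*$ small enough to make this quadratic form positive definite gives exponential stability of the origin in the $(x,y)$ (hence $(x,z)$) coordinates for all $\varepsilon<\varepsilon^*$. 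Without this interconnection analysis there is no proof of the stated theorem, only an application of it.
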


\end{document}